\documentclass{article} 
\usepackage{iclr2022_conference,times}


\usepackage{amsmath,amsfonts,bm}


\newcommand{\newterm}[1]{{\bf #1}}

\def\figref#1{figure~\ref{#1}}

\def\secref#1{section~\ref{#1}}



\def\eqref#1{equation~\ref{#1}}
\def\Eqref#1{Equation~\ref{#1}}

\def\twoeqrefs#1#2{equations~\ref{#1} and \ref{#2}}









\def\1{\bm{1}}








\def\vx{{\bm{x}}}


\def\mA{{\bm{A}}}
\def\mB{{\bm{B}}}

\def\mH{{\bm{H}}}

\def\mY{{\bm{Y}}}

\DeclareMathAlphabet{\mathsfit}{\encodingdefault}{\sfdefault}{m}{sl}
\SetMathAlphabet{\mathsfit}{bold}{\encodingdefault}{\sfdefault}{bx}{n}


\def\gG{{\mathcal{G}}}



\def\sV{{\mathbb{V}}}










\usepackage{url}

\usepackage[utf8]{inputenc} 
\usepackage[T1]{fontenc}    

\usepackage{url}            
\usepackage{booktabs}       
\usepackage{amsfonts}       
\usepackage{nicefrac}       
\usepackage{microtype}      
\usepackage{xcolor}         

\usepackage{csquotes} 
\usepackage{amssymb} 
\usepackage{adjustbox}

\usepackage{amsthm}
\usepackage[english]{babel} 
\newtheorem{lemma}{Lemma}
\usepackage{soul}
\usepackage{xr}

\usepackage[ruled,linesnumbered]{algorithm2e}

\usepackage{caption}
\usepackage{subcaption}
\usepackage{multicol}

\usepackage{siunitx,pgf}
\newcommand*{\temperature}[1]{\pgfmathtruncatemacro{\temp}{#1+273}\SI{\temp}{\kelvin} (\SI{#1}{\celsius})}

\usepackage{wrapfig}

\usepackage{enumitem} 

\usepackage[toc,page,header]{appendix}
\usepackage{minitoc}

\usepackage{listings}
\definecolor{codegreen}{rgb}{0,0.6,0}
\definecolor{codegray}{rgb}{0.5,0.5,0.5}
\definecolor{codepurple}{rgb}{0.58,0,0.82}
\definecolor{backcolour}{rgb}{0.95,0.95,0.92}
\definecolor{codeblue}{rgb}{0.0,0.5,0.95}

\lstdefinestyle{mystyle}{
    backgroundcolor=\color{backcolour},   
    commentstyle=\color{codegreen},
    keywordstyle=\color{codeblue},
    numberstyle=\tiny\color{codegray},
    stringstyle=\color{codepurple},
    basicstyle=\ttfamily\footnotesize,
    breakatwhitespace=false,         
    breaklines=true,                 
    captionpos=b,                    
    keepspaces=true,                 
    numbers=left,                    
    numbersep=4pt,                  
    showspaces=false,                
    showstringspaces=false,
    showtabs=false,                  
    tabsize=2
}

\lstset{style=mystyle}

\definecolor{lightblue}{RGB}{31,119,180}
\usepackage{hyperref}
\hypersetup{colorlinks=true,
            linkcolor=lightblue,
            citecolor=black,
            urlcolor=blue}

            
\usepackage[toc,page,header]{appendix}
\usepackage{minitoc}

\title{Convergent Graph Solvers}
\author{Junyoung Park, Jinhyun Choo \& Jinkyoo Park\\
KAIST, Daejeon, South Korea \\
\texttt{\{junyoungpark,jinhyun.choo,jinkyoo.park\}@kaist.ac.kr} \\
}

\newcommand{\codelink}{\href{https://github.com/Junyoungpark/CGS}{https://github.com/Junyoungpark/CGS}.}
\newcommand{\highlight}{black}
\iclrfinalcopy

\iclrfinalcopy
\begin{document}

\maketitle
\doparttoc
\faketableofcontents

\begin{abstract}
We propose the convergent graph solver \newterm{(CGS)}\footnote{The code is available at \codelink}, a deep learning method that learns iterative mappings to predict the properties of a graph system at its stationary state (fixed point) \textit{with guaranteed convergence}. The forward propagation of CGS proceeds in three steps: (1) constructing the input-dependent linear contracting iterative maps, (2) computing the fixed points of the iterative maps, and (3) decoding the fixed points to estimate the properties. The contractivity of the constructed linear maps guarantees the existence and uniqueness of the fixed points following the Banach fixed point theorem. To train CGS efficiently, we also derive a tractable analytical expression for its gradient by leveraging the implicit function theorem. We evaluate the performance of CGS by applying it to various network-analytic and graph benchmark problems. The results indicate that CGS has competitive capabilities for predicting the stationary properties of graph systems,  irrespective of whether the target systems are linear or non-linear. CGS also shows high performance for graph classification problems where the existence or the meaning of a fixed point is hard to be clearly defined, which highlights the potential of CGS as a general graph neural network architecture.
\end{abstract}

\section{Introduction}
\label{section:intro}

Our world is replete with networked systems, where their overall properties emerge from complex interactions among the system entities. Such networked systems attain their unique properties from their stationary states; hence, finding these stationary properties is a common goal for many problems that arise in the science and engineering field. Examples of such problems include the minimization of the molecule's potential energy that finds the stationary positions of the atoms to compute the potential energy of the molecule \citep{moloi2005iterative}, the PageRank algorithm that finds the stationary importance of online web pages to compute recommendation scores \citep{brin1998anatomy}, and the network analysis of fluid flow in porous media that finds the stationary pressures in pore networks to compute the macroscopic properties of the media \citep{gostick2016openpnm}. In these network-analytic problems, the network is often represented as a graph, and the stationary states are often computed through a problem-specific iterative method derived analytically from the prior knowledge of the target system. By applying the iterative map repeatedly, these methods compute the stationary states (i.e., fixed points) and the associated properties of the target system. However, analytically deriving such iterative maps for highly complex systems typically requires tremendous efforts and time.

Instead of deriving/designing the problem-specific iterative methods, researchers employ deep learning approaches to \textit{learn} the iterative methods \citep{dai2018learning, gu2020implicit, hsieh2019learning, huang2020int}. These approaches learn an iterative map directly without using domain-specific knowledge, but using only the input and output data. By applying the learned iterative map, these approaches (approximately) compute the fixed points and predict the properties of a target system. Graph Neural Networks (GNNs) have been widely used to construct iterative maps \citep{dai2018learning, gu2020implicit, scarselli2008graph, alet2019graph}. However, when applying this approach, the existence of the fixed points is seldom guaranteed. As a result, the number of iterative steps is often required to be specified, as a hyperparameter, to ensure the termination of the iterations. This may cause premature termination or inefficient backward propagation if the number of iterations is inadequately small or too large.

In this study, we propose a convergent graph solver (CGS), a deep learning method that can predict the solution of a target graph analytical problem using only the input and output data, and without requiring the prior knowledge of existing solvers or intermediate solutions. The forward propagation of CGS is designed to proceed in the following three steps:  
\begin{itemize}[leftmargin=0.5cm]
\vspace{-0.25cm}
\item \textbf{Constructing the input-dependent linear-contracting iterative maps.} CGS uses the input graph, which dictates the specification of the target network-analytic problem, to construct a set of linear contracting maps. This procedure formulates/set up the internal problem to be solved by considering the problem conditions and contexts (i.e., boundary conditions or initial conditions in PDE domains – the physical network problems). Furthermore, the input-dependent linear map can produce any size of transition map flexibly depending on the input size graph; thus helping the trained model to generalize over unseen problems with different sizes (size transferability).
\item	\textbf{Computing the fixed points via iterative methods}. CGS constructs a set of linear contracting maps, each of which is guaranteed to have a unique fixed point that embeds the important features for conducting various end tasks. Thus, CGS computes the unique solutions of the constructed linear maps via iterative methods (or direct inversion) with convergence guarantee.
\item	\textbf{Decoding the fixed points to estimate the properties}. By using a separate decoder architecture, we compute the fixed points in the latent space while expecting them to be an effective representation that can improve the predictive performance of the model. This enables CGS to be used not only for finding the real fixed points (or its transportation) if they exist, but also for computing the "virtual fixed point" as a representation learning method in general prediction tasks. 
\end{itemize}

The parameters of CGS are optimized with the gradient computed based on the implicit function theorem, which requires $\mathcal{O}(1)$ memory usage when computing the gradient along with the iterative steps. CGS is different from the other studies that solve the constrained optimization (with convergence guarantee) \citep{gu2020implicit, scarselli2008graph, tiezzi2020lagrangian} in that it does not impose any restriction when optimizing the parameters. Instead, CGS is inherently structured to have the unique fixed points owing to the uses of the linear map. Note that the structural restriction is only in the form of an iterative map; we can flexibly generate the coefficients of the linear map using any network (i.e., GNN) and utilize multiple linear maps to boost the representability.

We evaluate the performance of CGS using two types of paradigmatic network-analytic problems: physical diffusion in networks and the Markov decision process, where the true labels indeed exist and can be computed analytically using linear and non-linear iterative methods, respectively. We also employ CGS to solve various graph classification benchmark tasks to show that CGS can serve as a general (implicit) layer like other GNN networks when conducting typical graph classification tasks. In these experiments, we seek to compute a \textit{virtual fixed point} that can serve as the best hidden representation of the input when predicting the output. The results show that CGS can be (1) an effective solver for graph network problems or (2) an effective general computational layer for processing graph-structured data.

\section{Related Work}
\paragraph{Convergent neural models.} Previous studies that have attempted to achieve the convergence property of neural network embedding (e.g., hidden vectors of MLP and hidden states of recurrent neural networks) can be grouped into two categories: \textit{soft} and \textit{hard} approaches. Soft approaches typically attain the desired convergence properties by augmenting the loss functions \citep{erichson2019physics, miyato2018spectral}. Although the soft approaches are network-architecture agnostic, these methods cannot guarantee the convergence of the learned mappings. On the other hand, hard approaches seek to guarantee the convergence of the iterative maps by restricting their parameters in certain ranges \citep{gu2020implicit, tiezzi2020lagrangian, miller2018stable, kolter2019learning}. This is achieved by projecting the parameters of the models into stable regions. However, such projection may lead to non-optimal performances since it is performed after the gradient update, i.e., the projection is disentangled from training objective.

\textbf{Implicit deep models.  } The forward propagation of CGS, which solves fixed point iterations, is closely related to deep implicit models. Instead of defining the computational procedures (e.g., the depth of layer in neural network) explicitly, these models use implicit layers, which accept input-dependent equations and compute the solutions of the input equations, for the forward propagation. For instance, neural ordinary differential equations (NODE) \citep{chen2018neural, massaroli2020dissecting} solve ODE until the solver tolerance is satisfied or the integration domain is covered, the optimization layers \citep{gould2016differentiating, amos2017optnet} solve the optimization problem until the duality gap converges, and the fixed point models \citep{bai2019deep, winston2020monotone} solve the network-generated fixed point iterations until some numerical solver satisfy the convergence condition. Implicit models use these (intermediate) solutions to conduct various end tasks (e.g., regressions, or classifications). In this way, implicit models can impose desired \textit{behavioral characteristics} into layers as inductive bias and hence, often show superior parameter/memory efficiency and predictability.

\textbf{Fixed points of graph convolution.  } 
Methods that find the fixed points of graph convolutions have been suggested in various contexts of graph-related tasks. Several works have utilized GNN along with RNN-like connections to (approximately) find the fixed points of graph convolutions \citep{liao2018reviving, dai2018learning, li2015gated, scarselli2008graph}. Some have suggested to constrain the parameter space of GNN so that the trained GNN becomes a non-expansive map, thus producing the fixed point \citep{gu2020implicit,  tiezzi2020lagrangian}. Others have proposed to apply an additional GNN layer on the embedded graph and penalize the difference between the output of the additional GNN layer and the embedded graph to guide the GNN to find the fixed points \citep{scarselli2008graph, yang2021rethinking}. It has been shown that regularizing GNN to find its fixed points improves the predictive performance \citep{tiezzi2020lagrangian, yang2021rethinking}.

\textbf{Comparison between CGS and existing approaches.} Combining the ideas of (1) computing/using the fixed points of the graph convolution (as a representation learning) and (2) utilizing the implicit differentiation of the model (as a training method) has been proposed by numerous studies \citep{scarselli2008graph, liao2018reviving, Johnson2020learning, dai2018learning, bai2019deep, gallicchio2020fast, bai2019deep, tiezzi2020lagrangian, gu2020implicit}. Majority of those studies \textit{assume} that the convolution operators (iterative map) induce a convergent sequence of the (hidden) representation; however, this assumption typically does not hold unless certain conditions hold for the convolution operator (iterative map). When neither the convergence nor uniqueness holds, it can possibly introduce biases in the gradient computed by implicit differentiation \citep{liao2018reviving, blondel2021efficient}. Hence, to impose convergence, some methods restrict the learned convolutions to be strictly contractive (i.e., the hidden solutions are convergent) by projecting the learned parameters into a certain region and solving the constraint training problems, respectively \citep{gu2020implicit, tiezzi2020lagrangian}.  Unlike these methods restricting the parameters of the graph convolutions directly, CGS guarantees the convergence and the uniqueness of the fixed point by imposing the structural inductive bias on the iterative map, i.e., using the contractive linear map that has a unique fixed point, thus alleviating the need to solve the constrained parameter optimization \citep{tiezzi2020lagrangian}.

\section{Problem Description}
\label{sec:target problems}
The objective of many network-analytic problems can be described as: 
\begin{displayquote}
\textit{Find a solution vector $\mY^*$ from a graph $\gG$ that represents the target network system.} 
\end{displayquote}

In this section, we briefly explain a general iterative scheme to compute $\mY^*$ from $\gG$. The problem specification $\mathcal{G}=(\mathbb{V},\mathbb{E})$ is a directed graph that is composed of a set of nodes $\mathbb{V}$ and a set of edges $\mathbb{E}$. We define the $i^{\text{th}}$ node as $v_i$ and the edge from $v_i$ to $v_j$ as $e_{ij}$. The general scheme of the iterative methods is given as follows:
\begin{align}
\mH^{[0]} &= f(\gG), \label{eqn:problem-encoding}\\ 
\mH^{[n]} &= \mathcal{T}(\mH^{[n-1]}; \gG), \quad &n=1,2, ... \label{eqn:fixed-point-iter}\\ 
\mY^{[n]} &= g(\mH^{[n]};\gG), \quad &n=0,1, ... \label{eqn:problem-decoding}
\end{align}
where $f(\cdot)$ is the problem-specific initialization scheme that transforms $\gG$ into the initial hidden embedding $\mH^{[0]}$, $\mathcal{T}$ is the problem-specific iterative map that updates the hidden embedding $\mH^{[n]}$ from the previous embedding $\mH^{[n-1]}$, and $g(\cdot)$ is the problem-specific decoding function that predicts the intermediate solution $\mY^{[n]}$.

$\mathcal{T}$ is \textit{designed} such that the fixed point iteration (\eqref{eqn:fixed-point-iter}) converges to the unique fixed point $\mH^*$:
\begin{align}
\label{eqn:fixed-point-seq}
\begin{split}
\lim_{n\rightarrow \infty} \mH^{[n]} = \mH^* \text{ s.t. } \mH^* = \mathcal{T}(\mH^*, \gG)
\end{split}
\end{align}
The solution $\mY^*$ is then obtained by decoding $\mH^*$, i.e., $g(\mH^*;\gG) \triangleq \mY^*$.

In many real-world network-analytic problems, we can obtain $\gG$ and its corresponding solution $\mY^*$, but not $\mathcal{T}$, $\mH^{[n]}$ and $\mY^{[n]}$. Therefore, we aim to learn a mapping $\mathcal{T}$ from $\gG$ to $\mY^*$ without using $\mathcal{T}$, $\mH^{[n]}$ and $\mY^{[n]}$.


\subsection{Example: Graph Value Iteration}
\label{sec:graph-value-iteration}
Let us consider a finite Markov decision process (MDP), whose goal is to find the state values through the iterative applications of the Bellman optimality backup operator \citep{bellman1954theory}. We assume that the state transition is deterministic. 


We define $\displaystyle \gG=(\sV,\mathbb{E})$, where $\displaystyle \sV$ and $\mathbb{E}$ are the set of states and transitions of MDP respectively. $v_i$ corresponds to the $i^{\text{th}}$ state of the MDP, and $e_{ij}$ corresponds to the state transition from state $i$ to $j$. $e_{ij}$ exists only if the state transition from state $i$ to $j$ is allowed in the MDP. The features of $e_{ij}$ are the corresponding state transition rewards. The objective is to find the state values $V^*$. In this setting, the Bellman optimal backup operator $\mathcal{T}$ is defined as follows:
\begin{align}
\label{eqn:bellman-opt-backup}
\begin{split}
\displaystyle V^{[n]}_i=\mathcal{T}(V^{[n-1]}; \gG) \triangleq \max_{j\in\mathcal{N}(i)}(r_{ij}+\alpha V^{[n-1]}_j)
\end{split}
\end{align}
where $V^{[n]}_i$ is the state value of $v_i$ estimated at $n$-th iteration, $\mathcal{N}(i)$ is the set of states that can be reached from the $i^{\text{th}}$ state via one step transition, $r_{ij}$ is the immediate reward related to $e_{ij}$, and $\alpha$ is the discount rate of the MDP.

In this graph value iteration (GVI) problem, the initial values $V^{[0]}$ are set as zeros (i.e., $f(\cdot)=\mathbf{0}$), then $\mathcal{T}$ is applied until $V^{[n]}$ converges, and $V^{[n]}$ is decoded with the identity mapping (i.e., $g(\cdot)$ is identity and $H^{[n]} \triangleq V^{[n]}$). We will show how CGS constructs the transition map $\mathcal{T}$ from the input graph and predicts the converged state values $V^*$ without using \eqref{eqn:bellman-opt-backup} in the following sections.

\section{Convergent Graph Solvers}
\begin{figure*}[t]
\begin{center}
\includegraphics[width={\linewidth}]{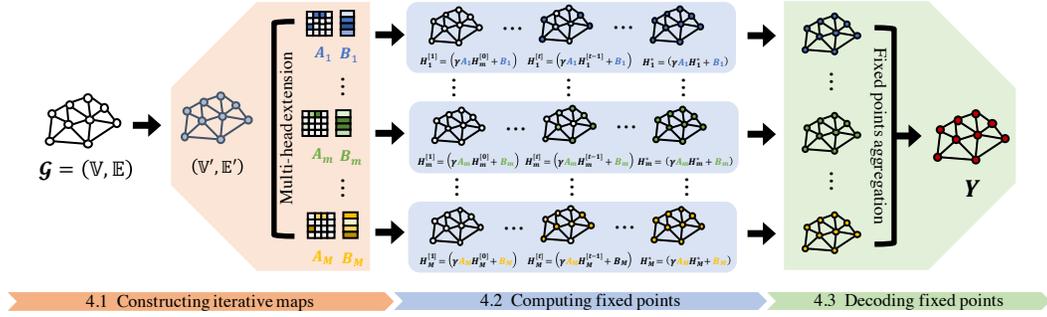}
\end{center}
\caption{\textbf{Overview of forward propagation of CGS.} Given an input graph $\gG$, the parameter-generating network $f_\theta$
constructs contracting linear transition maps $\mathcal{T}_\theta$. The fixed points $\mH^{*}_{m}$ of $\mathcal{T}_\theta$ are then computed via matrix inversion.  The fixed points are aggregated into $\mH^{*}$, and then the decoder $g_\theta$ decodes $\mH^{*}$ to produce $\mY^*$.}
\label{fig:csg-forward}
\end{figure*}

We propose CGS that predicts the solution $\mY^*$ from the given input graph $\gG$ in three steps: (1) constructing linear iterative maps $\mathcal{T}_\theta$ from $\gG$, (2) computing the unique fixed points $\mH^*$ of $\mathcal{T}_\theta$ via iterative methods, and (3) decoding $\mH^*$ to produce $\mY^*$, as shown in Figure \ref{fig:csg-forward}.

\subsection{Constructing linear iterative maps}
\label{subsection:cgs-param-gen}
CGS first constructs an input-dependent contracting linear map $\mathcal{T}_\theta(\cdot \, ;\gG)$ such that the repeated application of $\mathcal{T}_\theta(\cdot \, ;\gG)$ \textit{always} produces the unique fixed point $\mH^*$ (i.e. $\lim_{n\rightarrow \infty}\mH^{[n]}=\mH^*$) that embeds the essential characteristics of the input graph $\gG$ for conducting end tasks. In other words, CGS learns to construct iterative maps that are tailored to each input graph $\gG$, from which the unique fixed points of the target system are \emph{guaranteed} to be computed and used for conducting end tasks.

To impose the contraction property on the linear map, CGS utilizes the following iterative map:
\begin{align}
\label{eqn:contracting-linear-map}
\begin{split}
\mathcal{T}_\theta(\mH^{[n]};\gG) & \triangleq  \gamma \mA_\theta(\gG) \mH^{[n]} + \mB_\theta(\gG)
\end{split}
\end{align}
where $\gamma$ is the contraction factor, $\mA_\theta(\gG) \in \mathbb{R}^{p \times p}$ is the input-dependent transition parameter, $\mB_\theta(\gG) \in \mathbb{R}^{p}$ is the input-dependent bias parameter, and $p$ is the number of nodes in graph. 

To construct an input-dependent iterative map that preserves the structural constraints, which is required to guarantee the existence and uniqueness of a fixed point, we employ GNN-based parameter-generating network $f_\theta(\cdot)$. The parameter generation procedure for $\mathcal{T}_\theta$ starts by encoding $\gG=(\sV,\mathbb{E})$:
\begin{align}
\label{eqn:graph-encoding}
\sV', \mathbb{E}' = f_\theta(\sV, \mathbb{E})
\end{align}
where $\sV'$ and $\mathbb{E}'$ are the set of updated node embeddings $v_i' \in \mathbb{R}^q$ and edge embeddings $e_{ij}'\in \mathbb{R}$ respectively. CGS then constructs $\mA_\theta(\gG)$ by computing the $(i,j)^\text{th}$ element of $\mA_\theta(\gG)$ as follows:
\begin{align}
\label{eqn:constructing-A}
[\mA_\theta(\gG)]_{i,j} = \begin{cases}
\frac{\sigma(e'_{ij})}{d(i)} &\text{if } e_{ij} \text{ exists,} \\
0 &\text{otherwise.} \\
\end{cases}
\end{align}
where $\sigma(x)$ is a differentiable bounded function that projects $x$ into the range $[0,1]$ (e.g., Sigmoid function), and $d(i)$ is the outward degree of $v_i$ (i.e.,, the number of outward edges of $v_i$). $\mB_\theta(\gG)$ is simply constructed by vectorizing the updated node embeddings as $[\mB_\theta(\gG)]_{i,:}=v_i'$.

\underline{\textit{\textbf{Theorem 1}. The existence and uniqueness of $\mH^{*}$ induced by $\mathcal{T}_\theta(\mH^{[n]};\gG)$.}} \textit{The proposed scheme for constructing $\mA_\theta(\gG)$, along with the bounded contraction factor $0 < \gamma < 1$, is sufficient for $\mathcal{T}_\theta(\mH^{[n]};\gG)$ to be $\gamma$-contracting and, as a result, $\mathcal{T}_\theta$ has unique fixed point $\mH^{*}$. Refer to Appendix \ref{appendix:existence-of-converged-embedding} for the proof.}


\textbf{Multi-head extension.} 
$\mathcal{T}_\theta$ can be considered as a graph convolution layer defined in \eqref{eqn:contracting-linear-map}. Thus, CGS can be easily extended to multiple convolutions in order to model a more complex iterative map. To achieve such \textit{multi-head} extension with $M$ graph convolutions, one can design $f_\theta$ to produce a set of transition parameters $[\mA_1, ..., \mA_m, ..., \mA_M]$ and a set of bias parameters $[\mB_1, ..., \mB_m, ..., \mB_M]$ for $\mathcal{T}_m(\mH_m^{[n]};\gG) \triangleq \gamma \mA_m\mH_m^{[n]}+\mB_m$ for $m=1, ...,M$. 


\subsection{Computing fixed points}
The fixed point $\mH^*_{m}$ of the constructed iterative map $\mathcal{T}_m(\mH_m^{[n]};\gG) \triangleq \gamma \mA_m\mH_m^{[n]}+\mB_m$ satisfies $\mH_m^* = \gamma \mA_m\mH_m^* + \mB_m$ for $m=1, ..., M$. Due to the linearity, we can compute the fixed point of $\mathcal{T}_m$ via matrix inversion:
\begin{align}
\label{eqn:fixed-point-of-linear-map-inversion}
\begin{split}
\mH_m^* &= (I-\gamma \mA_m)^{-1}\mB_m\\
\end{split}
\end{align}
where $I \in \mathbb{R}^{p \times p}$ is the identity matrix. The existence of $(I-\gamma \mA_m)^{-1}$ is assured from the fact that $\mathcal{T}_m$ is contracting (see Appendix \ref{appendix:existence-of-inverse-matrix}). 
The matrix inversions can be found by applying various automatic differentiation tools while maintaining its differentiability.
However, the computational complexity of the matrix inversion scales $\mathcal{O}(p^{3})$, which can limit this approach from being scaled to large scale problems favorably. 

To scale CGS to larger graph inputs, we compute the fixed point of $\mathcal{T}_m$ by repeatedly applying the iterative map $\mathcal{T}_m$ starting from an arbitrary initial hidden state $\mH_m^{[0]} \in \mathbb{R}^{p}$ until the hidden embedding converges, i.e., $\lim_{n \rightarrow \infty} \mH_m^{[n]}=\mH_m^*$.

One can choose a way to compute the fixed point between inversion and iterative methods depending on the size of the transition matrix $\mA_m$ and its sparsity because these factors can result in different computational speed and accuracy. In general, for small-sized problems, inversion methods can be favorable; while for large-sized problems, iterative methods are more efficient. (See Appendix \ref{appendix:Ablation-6})

\subsection{Decoding fixed points}
\label{subsection:cgs-decoding}
The final step of CGS is to aggregate the fixed points of multiple iterative maps and decode the aggregated fixed points to produce $\mY^*$. The entire decoding step is given as follows:
\begin{align}
\label{eqn:decoding}
\mH^* &= [\mH^*_{m} || \, ... \, || \mH^*_{M}] \\
\mY^* &= g_\phi(\mH^*; \gG)
\end{align}
where $\mH^*$ is the aggregated fixed points, $M$ is the number of heads, and $g_\phi(\cdot)$ is the decoder which is analogous to the decoding function of the network-analytic problems (\eqref{eqn:problem-decoding}).

\section{Training CGS}
\label{section:training-cgs}
To train CGS with gradient descent, we need to calculate the partial derivatives of the scalar-valued loss $\mathcal{L}$ with respect to the parameters of $\mathcal{T}_\theta$. 
To do so, we express the partial derivatives using chain rule taking $\mH^*$ as the intermediate variable
\begin{align}
\label{eqn:parameter-gradient}
\begin{split}
\frac{\partial \mathcal{L}}{\partial (\cdot)}=\frac{\partial \mathcal{L}}{\partial \mH^*}\frac{\partial \mH^*}{\partial (\cdot)}
\end{split}
\end{align}
where $(\cdot)$ denotes the parameters of $\mA_\theta(\gG)$ or $\mB_\theta(\gG)$. Here, $\frac{\partial \mathcal{L}}{\partial \mH^*}$ is readily computable via an automatic differentiation package. However, computing $\frac{\partial \mH^*}{\partial (\cdot)}$ is less straightforward since $\mH^*$ and $(\cdot)$ are implicitly related via \eqref{eqn:contracting-linear-map}. One possible option to compute the partial derivatives is to backpropagate through the iteration steps. 

Although this approach can be easily employed using most automatic differentiation tools, it entails extensive memory usage. Instead, exploiting the stationarity of $\mH^*$, we can derive an analytical expression for the partial derivative using the implicit function theorem as follows:
\begin{align}
\label{eqn:CGS-gradient}
\frac{\partial \mH^*}{\partial (\cdot)}=-\Bigg(\frac{\partial g(\mH^*,\mA,\mB)}{\partial \mH^*}\Bigg)^{-1}\frac{\partial g(\mH^*,\mA,\mB)}{\partial (\cdot)}
\end{align}
where $g(\mH,\mA,\mB) = \mH-(\gamma \mA \mH +\mB)$. Here, we omit the input-dependency of $\mA$ and $\mB$ for notational brevity. \textcolor{\highlight}{We compute the inverse terms via an iterative method.} This option allows one to train CGS with constant memory consumption over the iterative steps. The derivation of the partial derivatives and the software implementation of \eqref{eqn:CGS-gradient} are provided in Appendix \ref{appendix:cgs-derivatives} and \ref{appendix:GSS-pseudocode}  respectively.


\section{Experiments}
We first evaluate the performance of CGS for two types of network-analytic problems: (1) the stationary state of physical diffusion in networks where the true solutions can be computed from linear iterative maps, and (2) the state values via GVI where the true solutions can be calculated from non-linear iterative maps. We then assess the capabilities of CGS as a general GNN layer by applying CGS to solve several graph property prediction benchmarks problems.

\subsection{Physical diffusion in networks}
Diffusion of fluid, heat, and other physical quantities are omnipresent in the science and engineering applications.
Mathematically, physical diffusion in networks (e.g. pipe/pore networks) is often described by a graph.
The stationary state of the graph can be expressed as
\begin{align}
    \sum_{j \in \mathcal{N}(i)} k_{ij}(p_i - p_j) &= 0, \quad \forall v_i \in \sV \setminus \partial(\sV), 
    \label{eqn:linear-network}\\
    p_i &= p_i^b, \quad \forall v_i \in \partial(\sV), \label{eqn:linear-network-bc}
\end{align}
where $p_i$ and $p_j$ are the potentials at $v_i$ and $v_j$ respectively, $k_{ij}$ is the conductance of the edge the connects the two nodes, and $p_i^b$ is the prescribed potentials at the boundary nodes that belong to the set $\partial(\sV)$. \Eqref{eqn:linear-network} specializes to a particular diffusion problem according to how $p$ is prescribed (e.g. pressure for fluid flow, and temperature for heat transfer).

\begin{figure}[t]
   \begin{minipage}{0.60\textwidth}
     \centering
     \includegraphics[width=.99\linewidth]{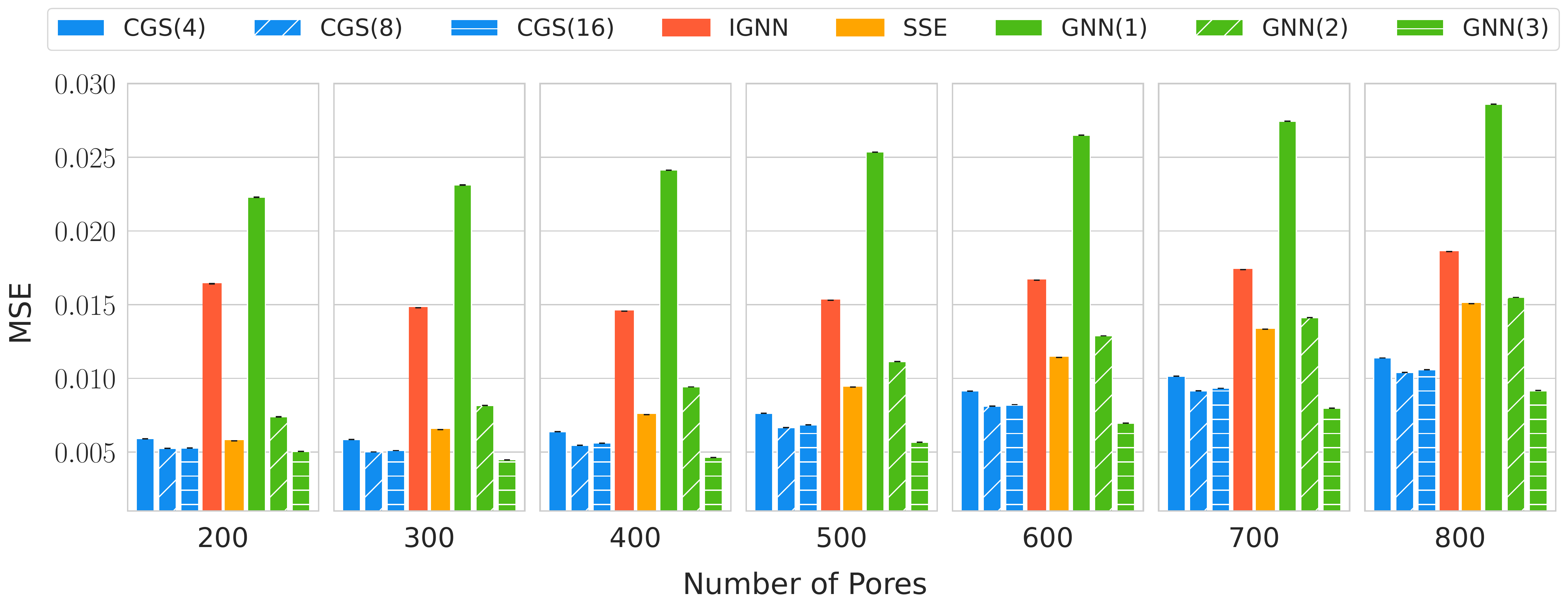}
     \caption{\textbf{Diffusion experiment results.} The x- and y-axis are the number of pores and the average $\text{MSE}$ of the test graphs. The error bars indicates the standard error of predictions (measured from 500 instances per each size).}
     \label{fig:pnm_exp}
   \end{minipage}\hfill
   \begin{minipage}{0.35\textwidth}
     \centering
     \includegraphics[width=.99\linewidth]{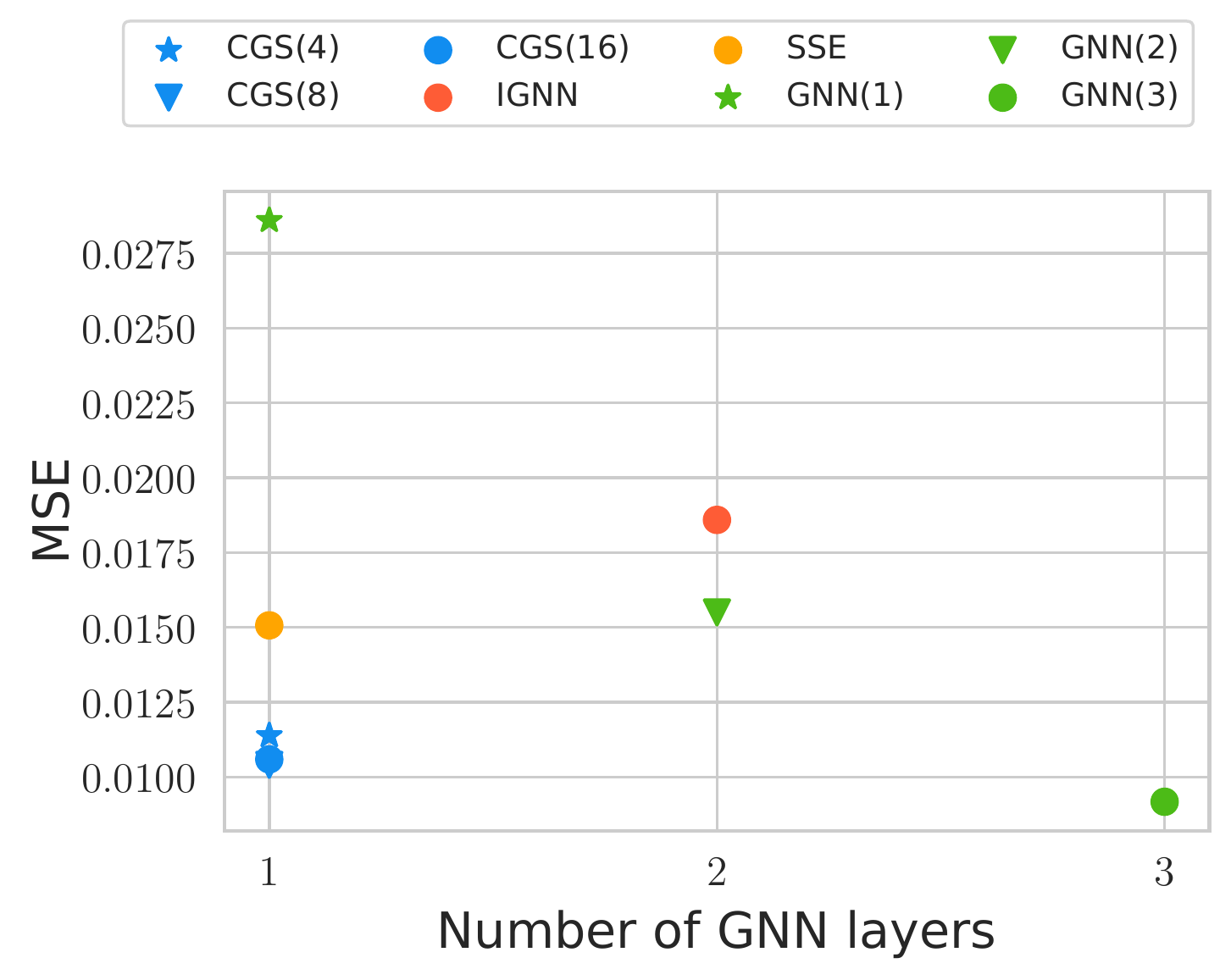}
     \caption{\textbf{Number of GNN layers vs. MSE ($n_s$ = 800).}}
     \label{fig:pnm_parameters}
   \end{minipage}
\end{figure}


\begin{wrapfigure}{r}{0.45\textwidth} 
    \includegraphics[width=0.45\textwidth]{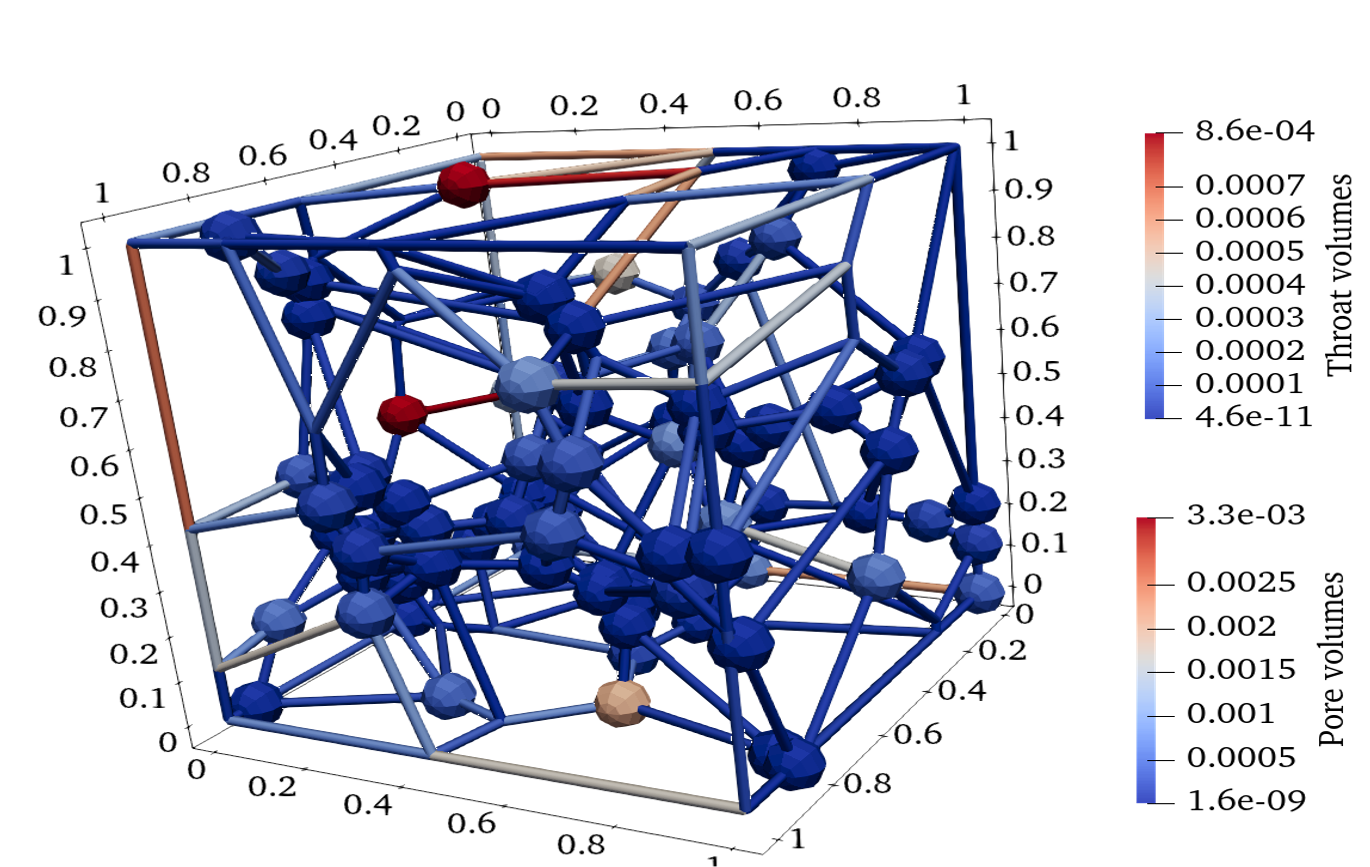}
    \caption{\textbf{Pore network graph}}
    \label{fig:porenet-graph-example}
\end{wrapfigure}

As an example of physical diffusion, we consider fluid flow in porous media -- particularly, finding the fluid pressures of a pore network that is in equilibrium state (the solutions of \twoeqrefs{eqn:linear-network}{eqn:linear-network-bc}). We model the pore network as a 3D graph whose nodes and edges correspond to pore chambers and throats respectively as shown in \figref{fig:porenet-graph-example}. We assume linear diffusion such that $p$ \textcolor{blue}{(i.e., $\mY^*$)} can be computed using a \textit{linear} iterative map \citep{gostick2016openpnm}.



We employ CGS to predict the equilibrium pressures $\mY^*$ inside pore networks $\gG$. The node features are the Cartesian coordinates, volume, diameter, and boundary indicator of its corresponding pore. The boundary pressure is also as a node feature if the node corresponds to a boundary pore. The edge features are the cylinder volume, diameter, and length of its corresponding throat. We sample training graphs such that the graphs fit into 0.1 m$^3$ cubes. The training graphs, which have 50--200 nodes, are then randomly generated (See Appendix~\ref{appendix:PN-experiment}). We train CGS such that it minimizes the mean-squared error (MSE) between the predicted ones and $\mY^*$. 

To investigate the effectiveness of the multi-head extension, we train {\fontfamily{lmtt}\selectfont CGS(4)}, {\fontfamily{lmtt}\selectfont CGS(8)} and {\fontfamily{lmtt}\selectfont CGS(16)}, where {\fontfamily{lmtt}\selectfont CGS($m$)} denotes CGS with $m$ heads. For the baseline models, we use implicit GNNs, {\fontfamily{lmtt}\selectfont IGNN}, \citep{gu2020implicit}, {\fontfamily{lmtt}\selectfont SSE} \citep{dai2018learning}, and $n$-layer GNN models {\fontfamily{lmtt}\selectfont GNN$(n)$}. {\fontfamily{lmtt}\selectfont IGNN} and {\fontfamily{lmtt}\selectfont SSE} find the fixed points in the forward propagation step. We utilize the same GNN architecture as the encoders for all baselines except for {\fontfamily{lmtt}\selectfont SSE}. Please refer to Appendix \ref{appendix:PN-experiment} for the details about the data generation, network architectures, and training schemes.

All {\fontfamily{lmtt}\selectfont CGS} models show better generalization capabilities than the baselines in predicting $\mY^*$ as shown in Figure \ref{fig:pnm_exp}, \textcolor{\highlight}{even though all models show similar prediction errors during training (See Appendix  \ref{appendix:pn-training-curve}).} CGSs with higher $m$ show superior prediction results for the test cases. This difference evinces that the use of multi-head extension (i.e., multiple linear iterative maps) is advantageous due to the increased expressivity. Also, when comparing {\fontfamily{lmtt}\selectfont CGS(8)} and {\fontfamily{lmtt}\selectfont GNN(1)}, which utilize the same encoder architecture and thus has the same number of \textcolor{\highlight}{GNN layers} as shown in Figure \ref{fig:pnm_parameters}, {\fontfamily{lmtt}\selectfont CGS(8)} shows better prediction performance. This is because the 1-hop aggregation cannot provide enough information to compute the equilibrium pressure. This result indicates that CGS successfully accommodates the long-range patterns in graphs without adopting additional graph convolution layers.

\subsection{Graph value iteration}
\begin{figure}[t]
\begin{center}
\includegraphics[width={1.0\linewidth}]{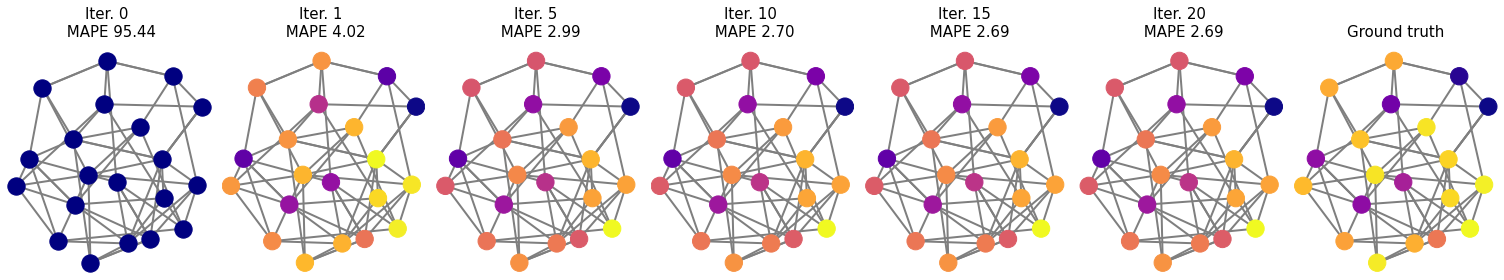}
\end{center}
\caption{\textbf{Solutions of GVI with {\fontfamily{lmtt}\selectfont CGS}.} The balls represent the states of MDP and the ball colors show the prediction results and their corresponding targets. More details are described in the main text.}
\label{fig:gvi_exp}
\end{figure}
\begin{table}[t]
\caption{\textbf{Graph Value Iteration results.} We report the average MAPE and policy prediction accuracies (in $\%$) of different $n_s$ and $n_a$ combinations with 500 repeats per combination. 
All metrics are measured per graph.}
\label{table:GVI-table-1}
\centering
\resizebox{\columnwidth}{!}{
\begin{tabular}{c|cc|cc|cc|cc|c}
\toprule
$n_s$ & \multicolumn{2}{c}{20} & \multicolumn{2}{c}{50} & \multicolumn{2}{c}{75} & \multicolumn{2}{c}{100} & \#. params \\
\midrule
$n_a$ & 5 & 10 & 10 & 15 & 10 & 15 & 10 & 15 & \\
\midrule
{\fontfamily{lmtt}\selectfont SSE} & 
\begin{tabular}[c]{@{}c@{}} 7.40 $\pm$ 4.54 \\ (0.75 $\pm$ 0.11 \%) \end{tabular} &
\begin{tabular}[c]{@{}c@{}} 5.98 $\pm$ 3.30 \\ (0.72 $\pm$ 0.12 \%) \end{tabular} &
\begin{tabular}[c]{@{}c@{}} 6.60 $\pm$ 2.74 \\ (0.70 $\pm$ 0.08 \%) \end{tabular} &
\begin{tabular}[c]{@{}c@{}} 97.52 $\pm$ 0.11 \\ (0.71 $\pm$ 0.12 \%) \end{tabular} &
\begin{tabular}[c]{@{}c@{}} 6.52 $\pm$ 2.49 \\ (0.69 $\pm$ 0.06 \%) \end{tabular} &
\begin{tabular}[c]{@{}c@{}} 97.51 $\pm$ 0.06 \\ (0.67 $\pm$ 0.06 \%) \end{tabular} &
\begin{tabular}[c]{@{}c@{}} 6.66 $\pm$ 2.21 \\ (0.68 $\pm$ 0.06 \%) \end{tabular} &
\begin{tabular}[c]{@{}c@{}} 97.50 $\pm$ 0.05 \\ (0.67 $\pm$ 0.06 \%) \end{tabular} &
43,521 \\
{\fontfamily{lmtt}\selectfont IGNN} & 
\begin{tabular}[c]{@{}c@{}} 13.87 $\pm$ 4.69 \\ (0.68 $\pm$ 0.12 \%) \end{tabular} &
\begin{tabular}[c]{@{}c@{}} 28.38 $\pm$ 1.77 \\ (0.63 $\pm$ 0.13 \%) \end{tabular} &
\begin{tabular}[c]{@{}c@{}} 28.13 $\pm$ 1.40 \\ (0.61 $\pm$ 0.08 \%) \end{tabular} &
\begin{tabular}[c]{@{}c@{}} 29.44 $\pm$ 1.35 \\ (0.62 $\pm$ 0.13 \%) \end{tabular} &
\begin{tabular}[c]{@{}c@{}} 28.21 $\pm$ 1.29 \\ (0.60 $\pm$ 0.07 \%) \end{tabular} &
\begin{tabular}[c]{@{}c@{}} 29.20 $\pm$ 0.88 \\ (0.60 $\pm$ 0.07 \%) \end{tabular} &
\begin{tabular}[c]{@{}c@{}} 28.00 $\pm$ 1.15 \\ (0.60 $\pm$ 0.06 \%) \end{tabular} &
\begin{tabular}[c]{@{}c@{}} 29.17 $\pm$ 0.81 \\ (0.60 $\pm$ 0.06 \%) \end{tabular} &
268,006 \\
\midrule
{\fontfamily{lmtt}\selectfont CGS(16)} & 
\begin{tabular}[c]{@{}c@{}} 4.60 $\pm$ 2.56 \\ (0.81 $\pm$ 0.10 \%) \end{tabular} &
\begin{tabular}[c]{@{}c@{}} 1.93 $\pm$ 1.22 \\ (0.84 $\pm$ 0.10 \%) \end{tabular} &
\begin{tabular}[c]{@{}c@{}} 1.93 $\pm$ 1.12 \\ (0.81 $\pm$ 0.07 \%) \end{tabular} &
\begin{tabular}[c]{@{}c@{}} \textbf{1.65} $\pm$ \textbf{1.06} \\ (0.84 $\pm$ 0.09 \%) \end{tabular} &
\begin{tabular}[c]{@{}c@{}} 1.76 $\pm$ 0.86 \\ (0.80 $\pm$ 0.06 \%) \end{tabular} &
\begin{tabular}[c]{@{}c@{}} 1.57 $\pm$ 0.86 \\ (0.80 $\pm$ 0.06 \%) \end{tabular} &
\begin{tabular}[c]{@{}c@{}} 1.73 $\pm$ 0.83 \\ (0.80 $\pm$ 0.05 \%) \end{tabular} &
\begin{tabular}[c]{@{}c@{}} 1.45 $\pm$ 0.77 \\ (0.79 $\pm$ 0.05 \%) \end{tabular} &
258,469 \\
{\fontfamily{lmtt}\selectfont CGS(32)} & 
\begin{tabular}[c]{@{}c@{}} \textbf{4.39} $\pm$ \textbf{2.67} \\ (0.85 $\pm$ 0.09 \%) \end{tabular} &
\begin{tabular}[c]{@{}c@{}} 2.00 $\pm$ 1.18 \\ (0.83 $\pm$ 0.09 \%) \end{tabular} &
\begin{tabular}[c]{@{}c@{}} 1.90 $\pm$ 1.07 \\ (0.81 $\pm$ 0.06 \%) \end{tabular} &
\begin{tabular}[c]{@{}c@{}} 2.16 $\pm$ 1.11 \\ (0.77 $\pm$ 0.10 \%) \end{tabular} &
\begin{tabular}[c]{@{}c@{}} 1.73 $\pm$ 0.85 \\ (0.81 $\pm$ 0.05 \%) \end{tabular} &
\begin{tabular}[c]{@{}c@{}} 1.24 $\pm$ 0.48 \\ (0.76 $\pm$ 0.06 \%) \end{tabular} &
\begin{tabular}[c]{@{}c@{}} 1.72 $\pm$ 0.87 \\ (0.80 $\pm$ 0.05 \%) \end{tabular} &
\begin{tabular}[c]{@{}c@{}} 1.19 $\pm$ 0.41 \\ (0.76 $\pm$ 0.05 \%) \end{tabular} &
265,669 \\
{\fontfamily{lmtt}\selectfont CGS(64)} & 
\begin{tabular}[c]{@{}c@{}} 4.55 $\pm$ 2.60 \\ (0.85 $\pm$ 0.09 \%) \end{tabular} &
\begin{tabular}[c]{@{}c@{}} \textbf{1.83} $\pm$ \textbf{1.18} \\ (0.86 $\pm$ 0.09 \%) \end{tabular} &
\begin{tabular}[c]{@{}c@{}} \textbf{1.78} $\pm$ \textbf{1.02} \\ (0.83 $\pm$ 0.07 \%) \end{tabular} &
\begin{tabular}[c]{@{}c@{}} 2.36 $\pm$ 1.37 \\ (0.85 $\pm$ 0.09 \%) \end{tabular} &
\begin{tabular}[c]{@{}c@{}} \textbf{1.68} $\pm$ \textbf{0.82} \\ (0.83 $\pm$ 0.05 \%) \end{tabular} &
\begin{tabular}[c]{@{}c@{}} \textbf{1.23} $\pm$ \textbf{0.76} \\ (0.83 $\pm$ 0.05 \%) \end{tabular} &
\begin{tabular}[c]{@{}c@{}} \textbf{1.59} $\pm$ \textbf{0.78} \\ (0.82 $\pm$ 0.05 \%) \end{tabular} &
\begin{tabular}[c]{@{}c@{}} \textbf{1.13} $\pm$ \textbf{0.67} \\ (0.82 $\pm$ 0.05 \%) \end{tabular} &
280,069 \\

\bottomrule
\end{tabular}
}
\end{table}

\label{subsection:GVI-exp}
We investigate the performance of CGS on graph value iteration (GVI) problems, where the iterative map (\eqref{eqn:bellman-opt-backup}) is \emph{non-linear} as explained in \secref{sec:graph-value-iteration}. The goal of the experiments is to show that CGS can estimate the state values that are computed from the nonlinear iterative map accurately, even when using the set of learned linear iterative maps, as shown in \figref{fig:gvi_exp}.

We train three CGS models, {\fontfamily{lmtt}\selectfont CGS(16)}, {\fontfamily{lmtt}\selectfont CGS(32)}, and {\fontfamily{lmtt}\selectfont CGS(64)}, and two baseline models, {\fontfamily{lmtt}\selectfont SSE} and {\fontfamily{lmtt}\selectfont IGNN}, on randomly generated MDP graphs. Each MDP graph has $n_s$ nodes and each node has $n_a$ edges (i.e. the MDP has $n_s$ distinct states and $n_a$ possible actions from each state). We sample $n_s$ and $n_a$ from the discrete uniform distributions with lower and upper bounds of (20, 50) and (5, 10), respectively. We evaluate the predictive performance of the trained CGS on randomly generated GVI problems to verify the generalization capability of CGS for different $n_s$ and $n_a$. We refer to Appendix \ref{appendix:GVI-experiment} for the details of the model architectures and training schemes.

Table \ref{table:GVI-table-1} summarizes the evaluation results of {\fontfamily{lmtt}\selectfont CGS} and the baseline models. We report the mean-absolute-percentage error (MAPE) between the predicted state values and their true values, and the accuracy between the derived and optimal policies (in $\%$), following \cite{deac2020graph}. 

All the CGS models show reliable value and policy predictions for the in-training cases ($n_s=20$) as well as for the out-of-training cases. In general, CGS with many heads shows better prediction results than the model with smaller number of heads. The two baseline models shows significant performance drops as $n_s$ and, especially, $n_a$ increase. (i.e. the number of incoming edges per node becomes larger). \textcolor{\highlight}{The results show that CGS can solve the unseen problems well (i.e., generalization). From the results, we can conclude constructing the transition map adaptively by utilizing the input graph information is a crucial factor for achieving better generalization and higher prediction accuracy of CGS, as compared to other baseline models ({\fontfamily{lmtt}\selectfont SSE} and {\fontfamily{lmtt}\selectfont IGNN}) that uses fixed transition maps.}

\textbf{Ablation studies. }
We conduct various ablation studies and confirm the following:
\textcolor{\highlight}{
\begin{itemize}[leftmargin=0.5cm]
\vspace{-0.25cm}
    \item \textbf{Appendix \ref{appendix:Ablation-1}}: the flexibility of $f_\theta(\cdot)$ design is beneficial to attain higher predictive performances,
    \item \textbf{Appendix \ref{appendix:Ablation-2}}: the input-dependency of both $\mA_\theta(\gG)$ and $\mB_\theta(\gG)$ is essential for generalization,
    \item \textbf{Appendix \ref{appendix:Ablation-3}}: the contraction factor $\gamma$ balances the predictability and computational time,
    \item \textbf{Appendix \ref{appendix:Ablation-4}}: the linear $\mathcal{T}_\theta(\cdot)$ has sufficient expressivity, compared to the nonlinear extension, while being robust to the hyperparameters,
    \item \textbf{Appendix \ref{appendix:Ablation-5}}: a relatively small of training samples ($\geq 512$) is enough to attain higher predictive performances, compared to {\fontfamily{lmtt}\selectfont IGNN}.
    \item \textbf{Appendix \ref{appendix:Ablation-6}}: the iterative fixed point computation scales better than the direct method in terms of memory usage.
\end{itemize}
\vspace{-0.25cm}
}
From the results, we confirmed the proposed design of $\mathcal{T}_\theta(\cdot)$ is effective and robust to the hyperparameters in solving GVI \textcolor{\highlight}{, and the proposed training scheme -- iterative fixed point computation and computing the gradient via the implicit function theorem -- is practically suitable in terms of memory usage.} Due to the page limit, we refer to \ref{appendix:gvi-extended-experiments} for the details and results of the ablation studies.

\subsection{Graph classification}
\begin{table*}[t]
\footnotesize
\centering
\caption{\textbf{Graph classification results (accuracy in $\%$)}. All benchmark results are reproduced from the original papers except {\fontfamily{lmtt}\selectfont IGNN}.}
\vskip 0.15in
\label{table:benchamrk}
\resizebox{\columnwidth}{!}{
\begin{tabular}{lcccccc}
\toprule & IMDB-B & IMDB-M & MUTAG & PROT. & PTC & NCI1 \\
\hline $\#$ graphs & 1000 & 1500 & 188 & 1113 & 344 & 4110 \\
$\#$ classes & 2 & 3 & 2 & 2 & 2 & 2 \\
Avg $\#$ nodes & 19.8 & 13.0 & 17.9 & 39.1 & 25.5 & 29.8 \\
\hline {\fontfamily{lmtt}\selectfont PATHCHY-SAN} \citep{niepert2016learning} & $71.0 \pm 2.2$ & $45.2 \pm 2.8$ & $92.6 \pm 4.2$ & $75.9 \pm 2.8$ & $60.0 \pm 4.8$ & $78.6 \pm 1.9$ \\
{\fontfamily{lmtt}\selectfont DGCNN}
\citep{zhang2018end}
& 70.0 & 47.8 & 85.8 & 75.5 & 58.6 & 74.4 \\
{\fontfamily{lmtt}\selectfont AWL}
\citep{ivanov2018anonymous}
& $74.5 \pm 5.9$ & $51.5 \pm 3.6$ & $87.9 \pm 9.8$ & $-$ & $-$ & $-$ \\
{\fontfamily{lmtt}\selectfont GIN}
\citep{xu2018powerful}
& $75.1 \pm 5.1$ & $\bm{52.3 \pm 2.8}$ & $89.4 \pm 5.6$ & $76.2 \pm 2.8$ & $64.6 \pm 7.0$ & $\bm{82.7 \pm 1.7}$ \\
{\fontfamily{lmtt}\selectfont GraphNorm}
\citep{cai2020graphnorm}
& $\bm{76.0 \pm 3.7}$ & $-$ & $\bm{91.6 \pm 6.5}$ & $\bm{77.4 \pm 4.9}$ & $\bm{64.9 \pm 7.5}$ & $81.4 \pm 2.4$ \\
\hline
{\fontfamily{lmtt}\selectfont LP-GNN}
\citep{tiezzi2020lagrangian}
& $71.2 \pm 4.7$ & $46.6 \pm 3.7$ & $90.5 \pm 7.0$ & $77.1 \pm 4.3$ & $64.4 \pm 5.9$ & $68.4 \pm 2.1$ \\
{\fontfamily{lmtt}\selectfont IGNN} (ours)
\citep{gu2020implicit}
& $-$ & $-$ & $78.1 \pm 11.8$ & $76.5 \pm 4.3 $ & $60.8 \pm 10.3$ & $72.8 \pm 1.9$ \\
\hline
{\fontfamily{lmtt}\selectfont CGS(1)} & $72.3 \pm 3.4$ & $49.6 \pm 3.6$ & $85.9 \pm 6.8$ & $74.1 \pm 4.1$ & $60.2 \pm 7.8$ & $75.9 \pm 1.4$ \\
{\fontfamily{lmtt}\selectfont CGS(4)} & $73.0 \pm 1.9$ & $51.0 \pm 1.7$ & $88.4 \pm 8.0$ & $76.3 \pm 6.3$ & $64.7 \pm 6.4$ & $76.3 \pm 2.0$ \\
{\fontfamily{lmtt}\selectfont CGS(8)} & $73.0 \pm 2.1$ & $51.1 \pm 2.2$ & $86.5 \pm 7.2$ & $76.3 \pm 4.9$ & $62.5 \pm 5.2$ & $77.6 \pm 2.2$ \\
{\fontfamily{lmtt}\selectfont CGS(16)} & $72.8 \pm 2.5$ & $50.4 \pm 2.1$ & $88.7 \pm 6.1$ & $76.3 \pm 4.9$ & $62.9 \pm 5.2$ & $77.6 \pm 2.0$ \\
{\fontfamily{lmtt}\selectfont CGS(32)} & $73.1 \pm 3.3$ & $50.3 \pm 1.7$ & $89.4 \pm 5.6$ & $76.0 \pm 3.2$ & $63.1 \pm 4.2$ & $77.2 \pm 2.0$ \\
\bottomrule
\end{tabular}
}
\end{table*}






  
We show that CGS can also perform general graph classification tasks accurately, where the existence or the meaning of a fixed point is hard to be clearly defined, although it is originally designed to predict quantities related to the fixed points.

We assess the graph classification performance of CGS on six graph classification benchmarks: two social-network datasets (IMDB-Binary, IMDB-Multi) and four bioinformatics datasets (MUTAG, PROTEINS, PTC, NCI1). Since the social-network datasets do not have node features, they are generated based on the node degrees following \cite{xu2018powerful}. Also, the edge features are initialized with one vectors for all datasets. To conduct the graph classification tasks, we perform the sum readout over the outputs of CGS \textcolor{\highlight}{(i.e., summing all fixed points)}, and then utilize additional MLP to predict the graph labels from the readout value. We perform 10-fold cross validation and report the average and standard deviation of its accuracy for each validation fold, following the evaluation scheme of \cite{niepert2016learning}. We refer to Appendix \ref{appendix:details-graph-classification-experiments} for the details of the network architecture, training, and hyperparamter searchings.

The results in Table \ref{table:benchamrk} show that the classification performance of CGS is comparable to those of other methods using fixed point iteration ({\fontfamily{lmtt}\selectfont LP-GNN} and {\fontfamily{lmtt}\selectfont IGNN}). Note that we reproduced the results of {\fontfamily{lmtt}\selectfont IGNN} as the original paper used a different performance metric ({\fontfamily{lmtt}\selectfont IGNN} (ours)). We provide the additional benchmark results in Appendix \ref{appendix:graph-classification-IGNN-experiments} to compare with {\fontfamily{lmtt}\selectfont IGNN} following their test metric. In general, CGS shows better performance than {\fontfamily{lmtt}\selectfont LP-GNN} and {\fontfamily{lmtt}\selectfont IGNN}, which also find the fixed points of graph convolutions on the social-network datasets where the node features are not given. From the results, CGS can be interpreted as that CGS finds "virtual fixed points" that contain the most relevant information to classify the graph labels. These results indicate that CGS has a potential as an general graph convolution layer.

\section{Conclusion}
We propose the convergent graph solver (CGS) as a new learning-based iterative method to compute the stationary properties of network-analytic problems. CGS generates contracting input-dependent linear iterative maps, finds the fixed points of the maps, and finally decodes the fixed points to predict the solution of the network-analytic problems. Through various network-analytic problems, we show that CGS has competitive capabilities in predicting the outputs (properties) of complex target networked systems in comparison with the other GNNs. We also show that CGS can solve general graph benchmark problems effectively, showing the potential that CGS can be used as a general graph implicit layer for processing graph structured data.


\section{Ethic statements and reproducibility}
\paragraph{Ethics statement}
We propose a deep learning method that learns iterative mappings of the target problems. As we discussed in \secref{section:intro}, various types of engineering, science, and societal problems are formulated in the form of fixed point-finding problems. On the bright side, we expect the proposed method to expedite scientific/engineering discoveries by serving as a fast simulation. On the other hand, as our method rooted in the idea of finding fixed points, it can be used to analyze networks and finding an adversarial or weak point of a network that can change the results of many network-analytic algorithms that are ranging from everyday usages, such as recommendation engines of commercial services, and maybe life-critical usages.

\paragraph{Reproducibility}
As machine learning researchers, we consider the reproducibility of numerical results as one of the top priorities. Thus, we put a significant amount of effort into pursuing the reproducibility of our experimental results. As such, we set and tracked the random seed used for our experiments and confirmed the experiments were reproducible.

\bibliography{iclr2022_conference}
\bibliographystyle{iclr2022_conference}


\renewcommand \thepart{}
\renewcommand \partname{}

\newpage
\rule[0pt]{\columnwidth}{3pt}
\begin{center}
    \huge{\bf{Convergent Graph Solvers} \\
    \emph{Supplementary Material}}
\end{center}
\vspace*{3mm}
\rule[0pt]{\columnwidth}{1pt}
\vspace*{-.5in}


\appendix
\addcontentsline{toc}{section}{}
\part{}
\parttoc

\renewcommand{\theequation}{A.\arabic{equation}}
\setcounter{equation}{0}

\clearpage

\section{Proofs and derivations}
In this section, we prove the existence of the converged hidden embedding $\mH^*$ and $(I-\gamma\mA_\theta)^{-1}$ and give the complement derivation of the partial derivatives that is used to train CGS.

\subsection{Existence of converged embedding}
\label{appendix:existence-of-converged-embedding}

The proposed transition map is defined as follows:
\begin{align}
\label{eqn:appendix-contracting-linear-map}
\mathcal{T}_\theta(\mH^{[n]};\gG) = \gamma \mA_\theta(\gG) \mH^{[n]} + \mB_\theta(\gG)
\end{align}
where $0.0 < \gamma < 1.0$, $\mA_\theta(\gG) \in \mathbb{R}^{p \times p}$ with $||\mA_\theta(\gG)|| \leq 1.0$, and $\mB_\theta(\gG) \in \mathbb{R}^{p \times q}$.

We first show that the proposed transition map is contracting. Then, the existence of unique fixed point can be directly obtained by applying the Banach fixed point theorem \citep{banach1922operations}.
\begin{lemma}
$\mathcal{T}_\theta(\mH^{[n]};\gG)$ is a $\gamma$-contraction mapping.
\end{lemma}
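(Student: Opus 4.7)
The plan is to prove contractivity directly from the definition, reducing everything to an operator-norm bound on $\mA_\theta(\gG)$, and then invoke Banach's fixed point theorem for existence and uniqueness of $\mH^*$.

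First I would pick two arbitrary hidden states $\mH_1, \mH_2 \in \mathbb{R}^{p \times q}$ and compute the difference under $\mathcal{T}_\theta$. Because the map is affine, the bias $\mB_\theta(\gG)$ cancels, leaving
\begin{equation*}
\mathcal{T}_\theta(\mH_1;\gG) - \mathcal{T}_\theta(\mH_2;\gG) = \gamma\, \mA_\theta(\gG)\,(\mH_1 - \mH_2).
\end{equation*}
Taking an induced matrix norm (I would use $\|\cdot\|_\infty$, the maximum absolute row sum, because it interacts cleanly with the row-normalized construction of $\mA_\theta$) and applying submultiplicativity yields
\begin{equation*}
\|\mathcal{T}_\theta(\mH_1;\gG) - \mathcal{T}_\theta(\mH_2;\gG)\|_\infty \le \gamma\, \|\mA_\theta(\gG)\|_\infty\, \|\mH_1 - \mH_2\|_\infty.
\end{equation*}
Thus the claim reduces to showing $\|\mA_\theta(\gG)\|_\infty \le 1$.

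The key step is this norm bound, and it follows immediately from the construction in \eqref{eqn:constructing-A}. For each row $i$, the nonzero entries are exactly the $d(i)$ terms of the form $\sigma(e'_{ij})/d(i)$ for $j \in \mathcal{N}(i)$. Since $\sigma(\cdot) \in [0,1]$ by assumption, each row sum satisfies
\begin{equation*}
\sum_{j=1}^{p} \bigl|[\mA_\theta(\gG)]_{i,j}\bigr| \;=\; \sum_{j \in \mathcal{N}(i)} \frac{\sigma(e'_{ij})}{d(i)} \;\le\; \frac{d(i)}{d(i)} \;=\; 1,
\end{equation*}
so $\|\mA_\theta(\gG)\|_\infty \le 1$. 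Combining with the previous display gives
\begin{equation*}
\|\mathcal{T}_\theta(\mH_1;\gG) - \mathcal{T}_\theta(\mH_2;\gG)\|_\infty \le \gamma\, \|\mH_1 - \mH_2\|_\infty,
\end{equation*}
which is exactly the $\gamma$-contraction property since $0 < \gamma < 1$.

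I do not anticipate a significant obstacle: the construction was engineered so that $\mA_\theta(\gG)$ is sub-stochastic, which is the whole reason for dividing by the out-degree $d(i)$ and passing $e'_{ij}$ through a $[0,1]$-valued $\sigma$. One minor subtlety worth mentioning is the case of isolated nodes where $d(i) = 0$; the convention there is that the $i$-th row of $\mA_\theta$ is the zero row, which trivially satisfies the row-sum bound. Once the lemma is established, the statement of Theorem~1 follows by applying the Banach fixed point theorem on the complete metric space $(\mathbb{R}^{p \times q}, \|\cdot\|_\infty)$, giving both existence and uniqueness of $\mH^*$ with $\mH^* = \mathcal{T}_\theta(\mH^*; \gG)$.
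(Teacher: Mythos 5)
Your proof is correct and follows essentially the same route as the paper's: the bias cancels, $\gamma$ factors out, and everything reduces to an operator-norm bound $\|\mA_\theta(\gG)\|\le 1$. The one substantive difference is that you actually derive that bound from the construction in \eqref{eqn:constructing-A}, whereas the paper's proof simply assumes $\|\mA_\theta(\gG)\|\le 1$ and appeals to ``the property of the spectral norm.'' Your choice of the induced $\infty$-norm (maximum absolute row sum) is in fact the right one here: the row-normalized, $[0,1]$-weighted entries give row sums at most $1$, hence $\|\mA_\theta(\gG)\|_\infty\le 1$, while the same construction does \emph{not} guarantee a spectral-norm bound (a sub-stochastic matrix with both rows concentrated on one column, e.g.\ two nodes each having their single outgoing edge point to node $1$ with $\sigma(e'_{i1})$ near $1$, has $2$-norm close to $\sqrt{2}$). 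So your argument both reproduces the paper's proof and quietly repairs its imprecision; the remarks on $d(i)=0$ rows and on invoking Banach's theorem in the complete space $(\mathbb{R}^{p\times q},\|\cdot\|_\infty)$ are correct and complete the picture.
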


\begin{proof} The proof is trivial. Consider the following equations:
\begin{align*}
||\mathcal{T}_\theta(\mH^{[n+1]};\gG)-\mathcal{T}_\theta(\mH^{[n]};\gG)|| &= 
||\gamma\mA_\theta(\gG)H^{[n+1]}+\mB_\theta(\gG)-\gamma\mA_\theta(\gG)H^{[n]}-\mB_\theta(\gG)|| \\
&= \gamma ||\mA_\theta(\gG)(\mH^{[n+1]}-\mH^{[n]})|| \\
&\leq \gamma ||\mA_\theta(\gG)||\times||(\mH^{[n+1]}-\mH^{[n]})|| \\
&= \gamma ||(\mH^{[n+1]}-\mH^{[n]})|| \\
\end{align*}
The inequality holds by the property of the spectral norm. Therefore, $\mathcal{T}_\theta(\mH^{[n]};\gG)$ is $\gamma$-contracting.
\end{proof}

\subsection{Existence of inverse matrix}
\label{appendix:existence-of-inverse-matrix}
CGS can find the fixed point $H^*$ by pre-multiplying the inverse matrix of $(I-\gamma\mA_\theta)$ to $B_\theta$, which is given as follows:
\begin{align}
\label{eqn:appendix-fixed-point-of-linear-map-multi-hop}
\begin{split}
\mH^* &= (I-\gamma \mA_\theta)^{-1}\mB_\theta\\
\end{split}
\end{align}

The following lemma shows the existence of the inverse matrix.

\begin{lemma}
With matrix $\mA_\theta$, which is constructed using \eqref{eqn:contracting-linear-map}, and $0.0< \gamma < 1.0$, $(I-\gamma \mA_\theta)$ is invertible.
\end{lemma}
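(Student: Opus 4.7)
The plan is to show that $(I - \gamma \mA_\theta)$ is invertible by proving that the spectral radius satisfies $\rho(\gamma \mA_\theta) < 1$; then $1$ cannot be an eigenvalue of $\gamma \mA_\theta$, equivalently $0$ cannot be an eigenvalue of $I - \gamma \mA_\theta$, so the matrix has nonzero determinant and is invertible.

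The key observation is that the construction in \eqref{eqn:constructing-A} automatically yields a row-sum bound. Row $i$ of $\mA_\theta$ contains exactly $d(i)$ nonzero entries, each of the form $\sigma(e'_{ij})/d(i) \in [0, 1/d(i)]$, since $\sigma$ maps into $[0,1]$. Hence the absolute row sum is at most $d(i)\cdot (1/d(i)) = 1$, giving $\|\mA_\theta\|_\infty \leq 1$ in the induced maximum-absolute-row-sum norm. Since the spectral radius is dominated by any induced matrix norm, $\rho(\mA_\theta) \leq \|\mA_\theta\|_\infty \leq 1$, and therefore $\rho(\gamma \mA_\theta) = \gamma\,\rho(\mA_\theta) \leq \gamma < 1$. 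This implies $I - \gamma \mA_\theta$ has no zero eigenvalue, so it is invertible. As a bonus, the Neumann series $\sum_{k \geq 0} (\gamma \mA_\theta)^{k}$ converges in operator norm and furnishes a closed-form expression for the inverse, which is consistent with the iterative fixed-point solver described in the main text.

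An even shorter alternative is to reuse the preceding Lemma~1 directly: since $\mathcal{T}_\theta(\mH) = \gamma \mA_\theta \mH + \mB_\theta$ is a $\gamma$-contraction on a finite-dimensional space, Banach's fixed point theorem guarantees, for \emph{every} choice of $\mB_\theta$, a unique $\mH^*$ with $(I - \gamma \mA_\theta)\mH^* = \mB_\theta$. Existence and uniqueness of a solution to a linear system for every right-hand side is equivalent to invertibility of the coefficient matrix in finite dimensions, so the claim follows immediately.

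\textbf{Main obstacle.} There is essentially no substantive obstacle; the result is a short consequence of spectral-radius/Neumann-series considerations. The only point deserving care is making the row-sum bound \emph{explicit}, since Lemma~1 merely posits $\|\mA_\theta\| \leq 1$ as a hypothesis, whereas here that bound must actually be derived from \eqref{eqn:constructing-A} via the boundedness of $\sigma$ together with the normalization by the outward degree $d(i)$. I would therefore present the row-sum argument first as a small self-contained computation, and then conclude invertibility in one line.
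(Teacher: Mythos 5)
Your proposal is correct, and your primary argument takes a genuinely different route from the paper. The paper proves invertibility via the invertible matrix theorem: any $\vx$ with $(I-\gamma\mA_\theta)\vx=0$ is a fixed point of the $\gamma$-contraction $\mathcal{F}(\vx)=\gamma\mA_\theta\vx$, and since Banach's theorem forces that fixed point to be unique (and $0$ is one), the null space is trivial — so your ``even shorter alternative'' is essentially the paper's proof, differing only in that you invoke existence-and-uniqueness for every right-hand side $\mB_\theta$ rather than just the homogeneous case. Your main route, by contrast, is purely linear-algebraic: derive the row-sum bound $\|\mA_\theta\|_\infty\leq 1$ directly from \eqref{eqn:constructing-A} (each row has $d(i)$ entries, each at most $1/d(i)$), conclude $\rho(\gamma\mA_\theta)\leq\gamma<1$, and read off invertibility plus the Neumann-series representation of the inverse. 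This buys two things the paper's proof does not: it makes the bound on $\mA_\theta$ an explicit consequence of the construction rather than a hypothesis (the paper's Lemma~1 simply posits $\|\mA_\theta\|\leq 1$ in the spectral norm, which does not in fact follow from the degree normalization — only the $\infty$-norm bound and hence the spectral-radius bound do — so your argument is the more robust one), and the Neumann series directly certifies convergence of the iterative solver used in the forward pass. What the paper's Banach-based route buys is brevity and uniformity: it reuses the contraction lemma verbatim and extends unchanged to the componentwise-nonexpansive nonlinear variant discussed in the appendix, where a spectral-radius argument is no longer available.
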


\begin{proof} According to the invertible matrix theorem, $(I-\gamma \mA_\theta)$ being invertible is equivalent to $(I-\gamma \mA_\theta)\vx=0$ only having the trivial solution $\vx=0$. By rewriting the equation as $\vx=\gamma \mA_\theta\vx$ and defining the $\gamma$-contraction map $\mathcal{F}(\vx)=\gamma \mA_\theta \vx$, we can get the following result: 
\begin{equation}
    \lim_{n \rightarrow \infty} \mathcal{F}^n(\vx)=0,\, \textrm{for any } \vx.
\end{equation}
From the Banach fixed point theorem, we can conclude that the unique solution $x$ is 0. Therefore, $(I-\gamma \mA_\theta)$ is invertible.
\end{proof}

\subsection{The complete derivation of the partial derivatives}
\label{appendix:cgs-derivatives}

The following equality shows the relationship of the gradient of the scalar-valued loss $\mathcal{L}$, the fixed point $\mH^*$, and the parameters of the fixed point equation $\mA_\theta(\gG)$, $\mB_\theta(\gG)$:
\begin{align}
\label{eqn:appendix-parameter-gradient}
\begin{split}
\frac{\partial \mathcal{L}}{\partial (\cdot)}=\frac{\partial \mathcal{L}}{\partial \mH^*}\frac{\partial \mH^*}{\partial (\cdot)}
\end{split}
\end{align}
where $(\cdot)$ denotes $\mA_\theta(\gG)$ or $\mB_\theta(\gG)$. Here, $\frac{\partial \mathcal{L}}{\partial \mH^*}$ are readily computable via automatic differentiation packages. However, computing $\frac{\partial \mH^*}{\partial (\cdot)}$ is less straightforward since $\mH^*$ and $(\cdot)$ are implicitly related via \eqref{eqn:contracting-linear-map}. 

We reformulate the transition map (\eqref{eqn:contracting-linear-map}) in a root-finding form
$g(\mH,\mA,\mB) = \mH-(\gamma \mA \mH +\mB)$. Here, we omit the input-dependency of $\mA$ and $\mB$ for the brevity of the notations. For the fixed point $\mH^*$, the following equality holds:
\begin{align}
\frac{\partial g(\mH^*(\mA, \mB),\mA,\mB)}{\partial \mA}=0
\end{align}

We can expand the derivative by using chain rule as follows:
\begin{align}
\frac{\partial g(\mH^*(\mA,\mB),\mA,\mB)}{\partial \mA}=
\frac{\partial g(\mH^*,\mA,\mB)}{\partial \mA}+\frac{\partial g(\mH^*,\mA,\mB)}{\partial \mH^*}\frac{\partial \mH^*(\mA, \mB)}{\mA}=0
\end{align}

By rearranging the terms, we can get a closed-form expression of $\frac{\partial \mH^*}{\partial \mA}$ as follows:
\begin{align}
\frac{\partial \mH^*}{\partial \mA}=-\Bigg(\frac{\partial g(\mH^*,\mA,\mB)}{\partial \mH^*}\Bigg)^{-1}\frac{\partial g(\mH^*,\mA,\mB)}{\partial \mA}
\end{align}
Here, we can compute $\frac{\partial g(\mH^*,\mA,\mB)}{\partial \mA}$ easily by using either automatic differentiation tool or manual gradient calculation. However, in practice, directly computing $(\frac{\partial g(\mH^*,\mA,\mB)}{\partial \mH^*})^{-1}$ can be problematic since it involves the inversion of Jacobian matrix. Instead, we can (1) construct a linear system whose solution is $(\frac{\partial g(\mH^*,\mA,\mB)}{\partial \mH^*})^{-1}$ and solve the system via some matrix decomposition or (2) solve another fixed point iteration which converges to $(\frac{g(\mH^*,\mA,\mB)}{\partial \mH^*})^{-1}$. We provide the pseudocode that shows how to compute the inverse Jacobian with the second option in \secref{appendix:GSS-pseudocode}. 

The partial derivative with respect to $\mB$ can be computed through the similar procedure and it is given as follows:
\begin{align}
\frac{\partial \mH^*}{\partial \mB}=-\Bigg(\frac{\partial g(\mH^*,\mA,\mB)}{\partial \mH^*}\Bigg)^{-1}\frac{\partial g(\mH^*,\mA,\mB)}{\partial \mB}
\end{align}
\clearpage
\section{Software Implementation}
\label{appendix:GSS-pseudocode}
In this section, we provide a \texttt{Pytorch} style pseudocode of CGS which computes the derivatives via the backward fixed point iteration.

\begin{lstlisting}[language=Python, caption=CGS pseudocode]
import torch
import torch.nn as nn


class CGP(nn.Module):
    """
    Convergent Graph Propagation
    """

    def __init__(self,
                 gamma: float,
                 activation: str,
                 tol: float = 1e-6,
                 max_iter: int = 50):

        super(CGP, self).__init__()
        self.gamma = gamma
        self.tol = tol
        self.max_iter = max_iter
        self.act = getattr(nn, activation)()

        self.frd_itr = None  # forward iteration steps

    def forward(self, A, b):
        """
        :param A: A matrix [#.heads x #. edges x #. edges]
        :param b: b matrix [#.heads x #. nodes x 1]
        :return: z: Fixed points [#. heads x #. nodes]
        """

        z, self.frd_itr = self.solve_fp_eq(A, b,
                                           self.gamma,
                                           self.act,
                                           self.max_iter,
                                           self.tol)
    
        # re-engage autograd and add the gradient hook
        z = self.act(self.gamma * torch.bmm(A, z) + b)  

        if z.requires_grad:
            y0 = self.gamma * torch.bmm(A, z) + b
            y0 = y0.detach().requires_grad_()
            z_next = self.act(y0).sum()
            z_next.backward()
            dphi = y0.grad
            J = self.gamma * (dphi * A).transpose(2, 1)
            
            def modify_grad(grad):
                y, bwd_itr = self.solve_fp_eq(J,
                                              grad,
                                              1.0,
                                              nn.Identity(),
                                              self.max_iter,
                                              self.tol)

                return y

            z.register_hook(modify_grad)
        z = z.squeeze(dim=-1)  # drop dummy dimension
        return z

    @staticmethod
    @torch.no_grad()
    def solve_fp_eq(A, b,
                    gamma: float,
                    act: nn.Module,
                    max_itr: int,
                    tol: float):
        """
        Find the fixed point of x = act(gamma * A * x + b)
        """

        x = torch.zeros_like(b, device=b.device)
        itr = 0
        while itr < max_itr:
            x_next = act(gamma * torch.bmm(A, x) + b)
            g = x - x_next
            if torch.norm(g) < tol:
                break
            x = x_next
            itr += 1
        return x, itr
        
\end{lstlisting}
\clearpage
\section{Details on experiments}
In this section, we provide the details of experiments including generation schemes and hyperparatemers of models. We run all experiments on a single desktop equipped with a NVIDIA Titan X GPU and AMD Threadripper 2990WX CPU.

\subsection{Physical diffusion experiments}
\label{appendix:PN-experiment}

\subsubsection{Data generation}
In this section, we provide the details of porous network problems.

\paragraph{Diffusion Equation}
For fluid flow in porous media described by Darcy's law, \eqref{eqn:linear-network} is specific to
\begin{align}
\sum_{j \in \mathcal{N}(i)} \frac{\pi}{8\mu}\frac{r_{ij}^4}{l_{ij}}(p_i-p_j) &= 0, \quad \forall v_i \in \sV \setminus \partial(\sV), 
\label{eqn:linear-network-pore}
\end{align}
where $\mu$ is the dynamic viscosity of the fluid, and $r_{ij}$ and $l_{ij}$ are the radius and length of the cylindrical throat between the $i$-th and $j$-th pore chambers respectively.

\paragraph{Data generation}
\label{paragraph:pn-data-gen}
We generate random pore networks inside a cubic domain of width $\SI{0.1}{\metre}$ using Voronoi tessellation. We sample the pore diameters from the uniform distribution of $U({9.9\times10^{-3}}\,\si{\metre}, 10.1\times10^{-3}\,\si{\metre})$ and assume that the fluid is water with $\mu=10^{-3}\,\si{\newton\second\per\meter\squared}$ under the temperature of $\temperature{25}$. The boundary conditions are the atmospheric pressure ($101,325\,\si{Pa}$) on the front surface of the cube, zero pressure on the back surface, and no-flux conditions on all other surfaces. We simulate the flow using \texttt{OpenPNM} \citep{gostick2016openpnm}. We normalize the pressure values (targets) by dividing the pressure by the maximum pressures of the pores. Note that this normalization is always viable because the maximum pressure is the prescribed boundary pressure on the front surface  due to the physical nature of the diffusion problem. 

\subsubsection{Details of CGS and baselines}
In this section, we provide the details of CGS and the baseline models. For brevity, we refer an MLP with hidden neurons $n_1$, $n_2$, ... $n_l$ for each hidden layer as MLP($n_1$, $n_2$, ..., $n_l$).

\paragraph{Network architectures} 
\begin{itemize}[leftmargin=*]
    \item {\fontfamily{lmtt}\selectfont CGS($m$)}: $f_\theta$ is a single layer attention variant of graph network (GN) \citep{battaglia2018relational} whose edge, attention, and node function are MLP(64). The output dimensions of the edge and node function are determined by the number of heads $m$. $\rho(\cdot)$ is the summation. $g_\theta$ is MLP(64, 32). All hidden activations are LeakyReLU. We set $\gamma$ as 0.5.
    \item {\fontfamily{lmtt}\selectfont IGNN}: $f_\theta$, $g_\theta$ is the same as the one of {\fontfamily{lmtt}\selectfont CGS(8)}.
    \item {\fontfamily{lmtt}\selectfont SSE}: We modify the original SSE implementation \citep{dai2018learning} so that the model can take the edge feature as an additional input. As $g_\theta$, we use the same architecture to the one of {\fontfamily{lmtt}\selectfont CGS($m$)}.
    \item {{\fontfamily{lmtt}\selectfont GNN($n$)}}: It is the plain GNN architecture having the stacks of $n$ different GNN layers as $f_\theta$. For each GNN layer, we utilize the same GN layer architecture to the one of {\fontfamily{lmtt}\selectfont CGS($m$)}. $g_\theta$ is the same as the one of {\fontfamily{lmtt}\selectfont CGS($m$)}.
\end{itemize}

\paragraph{Training details}
We train all models with the Adam optimizer \citep{kingma2014adam}, whose learning rate is initialized as 0.001 and scheduled by the cosine annealing method \citep{loshchilov2016sgdr}. The loss function is the mean-squared error (MSE) between the model predictions and the ground truth pressures. The training graphs were generated on-fly as specified in the \textbf{data generation} paragraph. We used 32 training graphs per gradient update. On every 32 gradient update, we sample the new training graph. We train 1000 gradient steps for all models.

\paragraph{Training curves}
The training curves of the CGS models and baselines are provided in \figref{fig:train-curve}.
\label{appendix:pn-training-curve}
\begin{figure}
    \centering
    \includegraphics[width=1.0\linewidth]{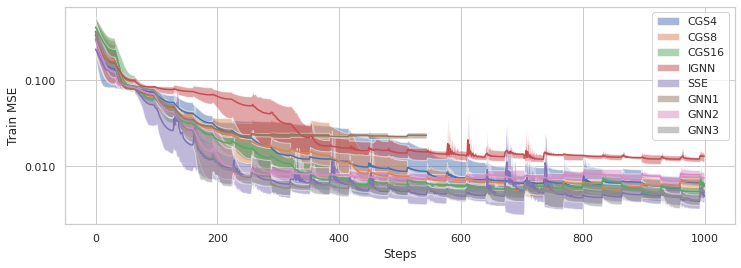}
    \caption{\textbf{Training curves of the CGS models and the baselines.} We repeat the training 5 times per each model. The solid lines show the average training MSE over the training steps. The shadow areas visualize the $\pm$ 1.0 standard deviation over the runs.}
    \label{fig:train-curve}
\end{figure}

\subsection{Graph value iteration experiments}
\label{appendix:GVI-experiment}
In this section, we provide the details of data generation, the CGS models and baseline architecture and their training schemes.

\subsubsection{Details of GVI data generation}
\paragraph{Data generation} We generate the MDP graph by randomly sampling $n_a$ out-warding edges for all nodes. The discount factor of MDP $\alpha$ is 0.9. Rewards are sampled from the uniform distribution whose upper and lower bounds are 1.0 and -1.0 respectively. The true state-values (labels) are computed by iteratively employing the exact analytical Bellman operator (\eqref{eqn:bellman-opt-backup}) until the state-values converge (i.e., value iteration). The convergence tolerance is 0.001.

\subsubsection{Details of CGS and baselines}

\paragraph{Network architectures} 
\begin{itemize}[leftmargin=*]
    \item {\fontfamily{lmtt}\selectfont CGS($m$)}: $f_\theta$ is a three layer attention GN network as whose edge, attention, and node function are MLP(128). The output dimensions of the edge and node function are determined by the number of heads $m$. $\rho(\cdot)$ is the summation. $g_\theta$ is MLP(64, 32). All hidden activations are LeakyReLU. We set $\gamma$ as 0.5.
    \item {\fontfamily{lmtt}\selectfont IGNN}: $f_\theta$, $g_\theta$ is the same as the one of {\fontfamily{lmtt}\selectfont CGS(32)}.
    \item {\fontfamily{lmtt}\selectfont SSE}: We modify the original SSE implementation \citep{dai2018learning} so that the model can take the edge feature as an additional input. As $g_\theta$, we use the same architecture to the one of {\fontfamily{lmtt}\selectfont CGS($m$)}.
\end{itemize}

\paragraph{Training details} We train all models with the Adam optimizer whose learning rate is initialized as 0.001 and scheduled by the cosine annealing method. The loss function is MSE between the model predictions and the ground truth state-values. The training graphs are generated on-fly as specified in the \textbf{data generation} paragraph. We use 64 training graphs per gradient update. On every 32 gradient graphs, we sample the new training graph. We train 5000 gradient steps for all models.

\subsection{Graph classification experiments}
\subsubsection{Experiment details and hyperparameters}
\label{appendix:details-graph-classification-experiments}
In this section, we explain the network architecture and training details of the six graph benchmark problems. Across all the benchmark dataset, we use the dataset implementation of \texttt{DGL} \citep{wang2019dgl} and cross-validation indices generated with \texttt{Scipy} \citep{2020SciPy-NMeth}. We set the contraction factor $\gamma$ as 0.5. We train all models with the Adam optimizer whose learning rate is initialized as 0.001 and scheduled by the cosine annealing method for 500 (100 for \texttt{NCI1} dataset due to the large datset size) epochs with 128 mini-batch size. We set the random seed of \texttt{Scipy}, \texttt{Pytorch} \cite{NEURIPS2019_9015}, and \texttt{DGL} as 0.

\paragraph{Hyperparameter tuning} Due to our limited computational resources, we search at most 10 different pairs of hyperparameters for each dataset. To find the initial hyperparemters, we first tune {\fontfamily{lmtt}\selectfont CGS(4)} on MUTAG, which is the smallest dataset. For each benchmark dataset, we start hyperparmeter tunings from the hyperparameters that are used for MUTAG-{\fontfamily{lmtt}\selectfont CGS(4)} and tune the activation functions of $f_\theta$ and $g_\theta$, the number of GN layers in $f_\theta$, the number of layers of $g_\theta$, and the dropout rate of $g_\theta$.

\paragraph{Network architectures} 
\begin{itemize}[leftmargin=*]
  \item {\textbf{IMDB-Binary}}: $f_\theta$ is a two layer attention GN network as whose edge, attention, and node function are MLP(128). The output dimensions of the edge and node function are determined by the number of heads $m$. The hidden dimensions of edge, and node are 64. $\rho(\cdot)$ is the summation. $g_\theta$ is MLP(64, 32). All hidden activations are Swish \cite{ramachandran2017searching}.
  \item \textbf{IMBD-Multi}: The same as \textbf{IMDB-Binary}.
  \item \textbf{MUTAG}: The same as \textbf{IMDB-Binary}. All hidden activations are LeakyReLU.
  \item \textbf{PROTEINS}: The same as the \textbf{MUTAG}. Apply dropout \cite{srivastava2014dropout} with probability 0.2 after the activation functions of $g_\theta$.
  \item \textbf{NCI1}: The same as \textbf{IMDB-Binary}.
\end{itemize}

\subsubsection{Extended benchmark results}
\begin{table*}[t]
\footnotesize
\centering
\caption{\textbf{Graph classification results (accuracy in $\%$)}.}
\vskip 0.15in
\label{table:benchamrk-IGNN}
\resizebox{\columnwidth}{!}{
\begin{tabular}{lcccccc}
\toprule & IMDB-B & IMDB-M & MUTAG & PROT. & PTC & NCI1 \\
\hline $\#$ graphs & 1000 & 1500 & 188 & 1113 & 344 & 4110 \\
$\#$ classes & 2 & 3 & 2 & 2 & 2 & 2 \\
Avg $\#$ nodes & 19.8 & 13.0 & 17.9 & 39.1 & 25.5 & 29.8 \\
\hline 
{\fontfamily{lmtt}\selectfont IGNN} \citep{gu2020implicit}
& $-$ & $-$ & $89.3 \pm 6.7$ & $77.7 \pm 3.4$ & $70.1 \pm 5.6$ & $\bm{80.5 \pm 1.9}$ \\
\hline 
{\fontfamily{lmtt}\selectfont CGS(4)} &
$76.0 \pm 2.0$ & $62.3 \pm 0.5$ & $93.5 \pm 7.7$ & $\bm{81.3 \pm 4.5}$ & $\bm{75.3 \pm 6.3}$ & $77.86 \pm 1.8$ \\
{\fontfamily{lmtt}\selectfont CGS(8)} &
$76.0 \pm 2.3$ & $62.7 \pm 0.5$ & $94.2 \pm 5.2$ & $81.0 \pm 4.4$ & $74.0 \pm 5.5$ & $78.9 \pm 2.0$ \\
{\fontfamily{lmtt}\selectfont CGS(16)} &
$\bm{76.4 \pm 2.0}$ & $\bm{63.4 \pm 0.4}$ & $94.6 \pm 4.3$ & $80.2 \pm 4.4$ & $75.0 \pm 7.0$ & $79.1 \pm 1.6$ \\
{\fontfamily{lmtt}\selectfont CGS(32)} &
$76.4 \pm 3.0$ & $59.4 \pm 1.0$ & $\bm{95.7 \pm 4.2}$ & $80.1 \pm 4.4$ & $75.0 \pm 6.0$ & $79.1 \pm 2.1$ \\
\bottomrule
\end{tabular}
}
\end{table*}






\label{appendix:graph-classification-IGNN-experiments}

We provide the benchmark results of {\fontfamily{lmtt}\selectfont CGS} with the test metric that is used from \citep{gu2020implicit}. The test metric is to calculate the average maximal test accuracies over the test folds. {\fontfamily{lmtt}\selectfont CGS} shows better predictive accuracies compared to {\fontfamily{lmtt}\selectfont IGNN}.

\clearpage
\section{Ablation studies}
\label{appendix:gvi-extended-experiments}
In this section, we provide the results of the ablation studies in GVI. The ablation studies were done to understand
\begin{itemize}
    \item the effect of $f_\theta(\cdot)$ architecture,
    \item the effect of transition map design,
    \item the effect of contraction factor $\gamma$,
    \item the effect of employing the non-linear transition maps,
    \item the sample efficiency of CGS.
\end{itemize}
We also provide the runtime comparisons of direct inversion and iterative methods to find the fixed points of hidden embedding.

\subsection{Effect of $f_\theta(\cdot)$ architecture}\label{appendix:Ablation-1} 
\begin{table}[t]
\caption{\textbf{Graph Value Iteration results over the different $f_\theta(\cdot)$ architecture.} We report the average MAPE and policy prediction accuracies (in $\%$) of different $n_s$ and $n_a$ combinations. All metrics are measured per graph. $\pm$ shows the standard deviation of the metrics.}
\label{table:appendix-GVI-table-1}
\centering
\resizebox{\columnwidth}{!}{
\begin{tabular}{c|cc|cc|cc|cc|c}
\toprule
$n_s$ & \multicolumn{2}{c}{20} & \multicolumn{2}{c}{50} & \multicolumn{2}{c}{75} & \multicolumn{2}{c}{100} & \#. params \\
\midrule
$n_a$ & 5 & 10 & 10 & 15 & 10 & 15 & 10 & 15 & \\
\midrule
{\fontfamily{lmtt}\selectfont 1-CGS(16)} & 
\begin{tabular}[c]{@{}c@{}} 7.33 $\pm$ 4.59 \\ (0.77 $\pm$ 0.11 \%) \end{tabular} &
\begin{tabular}[c]{@{}c@{}} 3.38 $\pm$ 2.00 \\ (0.75 $\pm$ 0.11 \%) \end{tabular} &
\begin{tabular}[c]{@{}c@{}} 3.21 $\pm$ 1.70 \\ (0.72 $\pm$ 0.08 \%) \end{tabular} &
\begin{tabular}[c]{@{}c@{}} 3.62 $\pm$ 2.08 \\ (0.73 $\pm$ 0.11 \%) \end{tabular} &
\begin{tabular}[c]{@{}c@{}} 3.23 $\pm$ 1.60 \\ (0.72 $\pm$ 0.06 \%) \end{tabular} &
\begin{tabular}[c]{@{}c@{}} 2.60 $\pm$ 1.21 \\ (0.70 $\pm$ 0.06 \%) \end{tabular} &
\begin{tabular}[c]{@{}c@{}} 3.15 $\pm$ 1.47 \\ (0.71 $\pm$ 0.06 \%) \end{tabular} &
\begin{tabular}[c]{@{}c@{}} 2.66 $\pm$ 1.16 \\ (0.71 $\pm$ 0.06 \%) \end{tabular} &
10,786 \\
{\fontfamily{lmtt}\selectfont 2-CGS(16)} & 
\begin{tabular}[c]{@{}c@{}} 6.86 $\pm$ 4.46 \\ (0.81 $\pm$ 0.10 \%) \end{tabular} &
\begin{tabular}[c]{@{}c@{}} 2.78 $\pm$ 1.75 \\ (0.81 $\pm$ 0.10 \%) \end{tabular} &
\begin{tabular}[c]{@{}c@{}} 2.17 $\pm$ 1.14 \\ (0.79 $\pm$ 0.07 \%) \end{tabular} &
\begin{tabular}[c]{@{}c@{}} 2.48 $\pm$ 1.54 \\ (0.78 $\pm$ 0.10 \%) \end{tabular} &
\begin{tabular}[c]{@{}c@{}} 1.99 $\pm$ 0.92 \\ (0.78 $\pm$ 0.06 \%) \end{tabular} &
\begin{tabular}[c]{@{}c@{}} 1.34 $\pm$ 0.61 \\ (0.76 $\pm$ 0.06 \%) \end{tabular} &
\begin{tabular}[c]{@{}c@{}} 1.89 $\pm$ 0.72 \\ (0.77 $\pm$ 0.06 \%) \end{tabular} &
\begin{tabular}[c]{@{}c@{}} 1.28 $\pm$ 0.58 \\ (0.77 $\pm$ 0.05 \%) \end{tabular} &
93,347 \\
{\fontfamily{lmtt}\selectfont 3-CGS(16)} & 
\begin{tabular}[c]{@{}c@{}} 6.12 $\pm$ 3.97 \\ (0.81 $\pm$ 0.10 \%) \end{tabular} &
\begin{tabular}[c]{@{}c@{}} 2.65 $\pm$ 1.68 \\ (0.83 $\pm$ 0.09 \%) \end{tabular} &
\begin{tabular}[c]{@{}c@{}} 2.33 $\pm$ 1.36 \\ (0.80 $\pm$ 0.07 \%) \end{tabular} &
\begin{tabular}[c]{@{}c@{}} 3.14 $\pm$ 1.79 \\ (0.84 $\pm$ 0.09 \%) \end{tabular} &
\begin{tabular}[c]{@{}c@{}} 2.23 $\pm$ 1.18 \\ (0.80 $\pm$ 0.06 \%) \end{tabular} &
\begin{tabular}[c]{@{}c@{}} 2.55 $\pm$ 1.16 \\ (0.80 $\pm$ 0.06 \%) \end{tabular} &
\begin{tabular}[c]{@{}c@{}} 2.13 $\pm$ 1.10 \\ (0.79 $\pm$ 0.05 \%) \end{tabular} &
\begin{tabular}[c]{@{}c@{}} 2.63 $\pm$ 1.12 \\ (0.80 $\pm$ 0.05 \%) \end{tabular} &
175,908 \\
{\fontfamily{lmtt}\selectfont 4-CGS(16)} & 
\begin{tabular}[c]{@{}c@{}} 4.42 $\pm$ 3.07 \\ (0.88 $\pm$ 0.08 \%) \end{tabular} &
\begin{tabular}[c]{@{}c@{}} 1.92 $\pm$ 1.20 \\ (0.86 $\pm$ 0.09 \%) \end{tabular} &
\begin{tabular}[c]{@{}c@{}} 1.72 $\pm$ 1.02 \\ (0.84 $\pm$ 0.06 \%) \end{tabular} &
\begin{tabular}[c]{@{}c@{}} 2.09 $\pm$ 1.28 \\ (0.84 $\pm$ 0.09 \%) \end{tabular} &
\begin{tabular}[c]{@{}c@{}} 1.60 $\pm$ 0.91 \\ (0.84 $\pm$ 0.05 \%) \end{tabular} &
\begin{tabular}[c]{@{}c@{}} 1.25 $\pm$ 0.77 \\ (0.83 $\pm$ 0.05 \%) \end{tabular} &
\begin{tabular}[c]{@{}c@{}} 1.57 $\pm$ 0.80 \\ (0.82 $\pm$ 0.05 \%) \end{tabular} &
\begin{tabular}[c]{@{}c@{}} 1.19 $\pm$ 0.72 \\ (0.83 $\pm$ 0.05 \%) \end{tabular} &
258,469 \\
\bottomrule
\end{tabular}
}
\end{table}

As the $\mA_\theta(\gG)$ and $\mB_\theta(\gG)$ generation schemes of CGS allow the employment of the arbitrary architecture of GNN as $f_\theta(\cdot)$, CGS has different predictive performances depending on the architectural selection of $f_\theta(\cdot)$. 

Here, we investigate the effect of the number of GN layers in $f_\theta(\cdot)$ to the predictive performance of CGS for the GVI problems. The number of the independent GN layers controls the range of information when CGS constructs the transition maps. That is, a  larger number of GN layers allows the information to be gathered from far neighborhoods while constructing $\mA_\theta(\gG)$ and $\mB_\theta(\gG)$.

Table \ref{table:appendix-GVI-table-1} shows the predictive performances of CGS models with the different number of GN layers in $f_\theta(\cdot)$. The model with $n$ GN layers is referred to as {\fontfamily{lmtt}\selectfont $n$-CGS(16)}. In general, the model with a larger number of GN layers performs better. These results highlight that allowing the flexibility in $f_\theta(\cdot)$ can be practically beneficial when we derive the model with guaranteed convergence.

\subsection{Design of the input-dependent transition maps}\label{appendix:Ablation-2}  
The transition map $\mathcal{T}_\theta(\cdot)$ of CGS is given as follows:
\begin{align}
\mathcal{T}_\theta(\mH^{[n]};\gG) =\gamma \mA_\theta(\gG) \mH^{[n]} + \mB_\theta(\gG)
\end{align}

By restricting $\mA_\theta(\gG)$ and $\mB_\theta(\gG)$ to be input independent, we can consider simpler variants of CGS. We can consider three variants which has (1) input-independent $\mA$ and $\mB$; (2) input-independent $\mA$ and input-dependent $\mB$; (3) input-dependent $\mA$ and input-independent $\mB$. Out of three variants, (1) cannot differentiate different $\gG$ and (2) is similar to \cite{gu2020implicit}. Therefore, we provide the GVI results of (3) - Fixed $\mB$ {\fontfamily{lmtt}\selectfont CGS} - in here. From Table \ref{table:trantion-map-ablation}, we can confirm that having input-dependent $\mA$ and $\mB$ in $\mathcal{T}_\theta(\cdot)$ shows best performances.

\begin{table}[t]
\caption{\textbf{Transition map ablation.} We report the average MAPE and policy prediction accuracies (in $\%$) of different $n_s$ and $n_a$ combinations with 500 repeats per combination. All metrics are measured per graph. $\pm$ shows the standard deviation of the metrics.}
\label{table:trantion-map-ablation}
\centering
\resizebox{\columnwidth}{!}{
\begin{tabular}{c|cc|cc|cc|cc|c}
\toprule
$n_s$ & \multicolumn{2}{c}{20} & \multicolumn{2}{c}{50} & \multicolumn{2}{c}{75} & \multicolumn{2}{c}{100} & \#. params \\
\midrule
$n_a$ & 5 & 10 & 10 & 15 & 10 & 15 & 10 & 15 & \\
\midrule
{\fontfamily{lmtt}\selectfont IGNN} & 
\begin{tabular}[c]{@{}c@{}} 13.87 $\pm$ 4.69 \\ (0.68 $\pm$ 0.12 \%) \end{tabular} &
\begin{tabular}[c]{@{}c@{}} 28.38 $\pm$ 1.77 \\ (0.63 $\pm$ 0.13 \%) \end{tabular} &
\begin{tabular}[c]{@{}c@{}} 28.13 $\pm$ 1.40 \\ (0.61 $\pm$ 0.08 \%) \end{tabular} &
\begin{tabular}[c]{@{}c@{}} 29.44 $\pm$ 1.35 \\ (0.62 $\pm$ 0.13 \%) \end{tabular} &
\begin{tabular}[c]{@{}c@{}} 28.21 $\pm$ 1.29 \\ (0.60 $\pm$ 0.07 \%) \end{tabular} &
\begin{tabular}[c]{@{}c@{}} 29.20 $\pm$ 0.88 \\ (0.60 $\pm$ 0.07 \%) \end{tabular} &
\begin{tabular}[c]{@{}c@{}} 28.00 $\pm$ 1.15 \\ (0.60 $\pm$ 0.06 \%) \end{tabular} &
\begin{tabular}[c]{@{}c@{}} 29.17 $\pm$ 0.81 \\ (0.60 $\pm$ 0.06 \%) \end{tabular} &
268,006 \\
\midrule
{Fixed $\mB$ \fontfamily{lmtt}\selectfont CGS(16)} & 
\begin{tabular}[c]{@{}c@{}} 10.38 $\pm$ 6.87 \\ (0.70 $\pm$ 0.12 \%) \end{tabular} &
\begin{tabular}[c]{@{}c@{}} 5.30 $\pm$ 2.97 \\ (0.67 $\pm$ 0.12 \%) \end{tabular} &
\begin{tabular}[c]{@{}c@{}} 6.60 $\pm$ 2.71 \\ (0.66 $\pm$ 0.08 \%) \end{tabular} &
\begin{tabular}[c]{@{}c@{}} 7.81 $\pm$ 3.01 \\ (0.68 $\pm$ 0.12 \%) \end{tabular} &
\begin{tabular}[c]{@{}c@{}} 6.74 $\pm$ 2.58 \\ (0.65 $\pm$ 0.06 \%) \end{tabular} &
\begin{tabular}[c]{@{}c@{}} 10.61 $\pm$ 1.78 \\ (0.64 $\pm$ 0.07 \%) \end{tabular} &
\begin{tabular}[c]{@{}c@{}} 6.87 $\pm$ 2.21 \\ (0.66 $\pm$ 0.05 \%) \end{tabular} &
\begin{tabular}[c]{@{}c@{}} 10.98 $\pm$ 1.66 \\ (0.65 $\pm$ 0.06 \%) \end{tabular} &
258,485 \\
{Fixed $\mB$ \fontfamily{lmtt}\selectfont CGS(32)} & 
\begin{tabular}[c]{@{}c@{}} 9.52 $\pm$ 6.45 \\ (0.70 $\pm$ 0.12 \%) \end{tabular} &
\begin{tabular}[c]{@{}c@{}} 6.12 $\pm$ 3.33 \\ (0.66 $\pm$ 0.13 \%) \end{tabular} &
\begin{tabular}[c]{@{}c@{}} 8.38 $\pm$ 2.96 \\ (0.65 $\pm$ 0.08 \%) \end{tabular} &
\begin{tabular}[c]{@{}c@{}} 9.00 $\pm$ 3.17 \\ (0.67 $\pm$ 0.12 \%) \end{tabular} &
\begin{tabular}[c]{@{}c@{}} 8.73 $\pm$ 2.79 \\ (0.65 $\pm$ 0.06 \%) \end{tabular} &
\begin{tabular}[c]{@{}c@{}} 12.59 $\pm$ 1.82 \\ (0.63 $\pm$ 0.07 \%) \end{tabular} &
\begin{tabular}[c]{@{}c@{}} 8.99 $\pm$ 2.36 \\ (0.65 $\pm$ 0.06 \%) \end{tabular} &
\begin{tabular}[c]{@{}c@{}} 13.10 $\pm$ 1.70 \\ (0.63 $\pm$ 0.06 \%) \end{tabular} &
265,701 \\
{Fixed $\mB$ \fontfamily{lmtt}\selectfont CGS(64)} & 
\begin{tabular}[c]{@{}c@{}} 9.68 $\pm$ 6.56 \\ (0.70 $\pm$ 0.12 \%) \end{tabular} &
\begin{tabular}[c]{@{}c@{}} 6.25 $\pm$ 3.42 \\ (0.68 $\pm$ 0.12 \%) \end{tabular} &
\begin{tabular}[c]{@{}c@{}} 7.84 $\pm$ 2.90 \\ (0.66 $\pm$ 0.08 \%) \end{tabular} &
\begin{tabular}[c]{@{}c@{}} 9.74 $\pm$ 3.26 \\ (0.69 $\pm$ 0.12 \%) \end{tabular} &
\begin{tabular}[c]{@{}c@{}} 8.16 $\pm$ 2.74 \\ (0.64 $\pm$ 0.06 \%) \end{tabular} &
\begin{tabular}[c]{@{}c@{}} 11.91 $\pm$ 1.80 \\ (0.64 $\pm$ 0.07 \%) \end{tabular} &
\begin{tabular}[c]{@{}c@{}} 8.46 $\pm$ 2.33 \\ (0.64 $\pm$ 0.06 \%) \end{tabular} &
\begin{tabular}[c]{@{}c@{}} 12.38 $\pm$ 1.69 \\ (0.63 $\pm$ 0.06 \%) \end{tabular} &
280,133 \\
\midrule
{\fontfamily{lmtt}\selectfont CGS(16)} & 
\begin{tabular}[c]{@{}c@{}} 4.60 $\pm$ 2.56 \\ (0.81 $\pm$ 0.10 \%) \end{tabular} &
\begin{tabular}[c]{@{}c@{}} 1.93 $\pm$ 1.22 \\ (0.84 $\pm$ 0.10 \%) \end{tabular} &
\begin{tabular}[c]{@{}c@{}} 1.93 $\pm$ 1.12 \\ (0.81 $\pm$ 0.07 \%) \end{tabular} &
\begin{tabular}[c]{@{}c@{}} \textbf{1.65} $\pm$ \textbf{1.06} \\ (0.84 $\pm$ 0.09 \%) \end{tabular} &
\begin{tabular}[c]{@{}c@{}} 1.76 $\pm$ 0.86 \\ (0.80 $\pm$ 0.06 \%) \end{tabular} &
\begin{tabular}[c]{@{}c@{}} 1.57 $\pm$ 0.86 \\ (0.80 $\pm$ 0.06 \%) \end{tabular} &
\begin{tabular}[c]{@{}c@{}} 1.73 $\pm$ 0.83 \\ (0.80 $\pm$ 0.05 \%) \end{tabular} &
\begin{tabular}[c]{@{}c@{}} 1.45 $\pm$ 0.77 \\ (0.79 $\pm$ 0.05 \%) \end{tabular} &
258,469 \\
{\fontfamily{lmtt}\selectfont CGS(32)} & 
\begin{tabular}[c]{@{}c@{}} \textbf{4.39} $\pm$ \textbf{2.67} \\ (0.85 $\pm$ 0.09 \%) \end{tabular} &
\begin{tabular}[c]{@{}c@{}} 2.00 $\pm$ 1.18 \\ (0.83 $\pm$ 0.09 \%) \end{tabular} &
\begin{tabular}[c]{@{}c@{}} 1.90 $\pm$ 1.07 \\ (0.81 $\pm$ 0.06 \%) \end{tabular} &
\begin{tabular}[c]{@{}c@{}} 2.16 $\pm$ 1.11 \\ (0.77 $\pm$ 0.10 \%) \end{tabular} &
\begin{tabular}[c]{@{}c@{}} 1.73 $\pm$ 0.85 \\ (0.81 $\pm$ 0.05 \%) \end{tabular} &
\begin{tabular}[c]{@{}c@{}} 1.24 $\pm$ 0.48 \\ (0.76 $\pm$ 0.06 \%) \end{tabular} &
\begin{tabular}[c]{@{}c@{}} 1.72 $\pm$ 0.87 \\ (0.80 $\pm$ 0.05 \%) \end{tabular} &
\begin{tabular}[c]{@{}c@{}} 1.19 $\pm$ 0.41 \\ (0.76 $\pm$ 0.05 \%) \end{tabular} &
265,669 \\
{\fontfamily{lmtt}\selectfont CGS(64)} & 
\begin{tabular}[c]{@{}c@{}} 4.55 $\pm$ 2.60 \\ (0.85 $\pm$ 0.09 \%) \end{tabular} &
\begin{tabular}[c]{@{}c@{}} \textbf{1.83} $\pm$ \textbf{1.18} \\ (0.86 $\pm$ 0.09 \%) \end{tabular} &
\begin{tabular}[c]{@{}c@{}} \textbf{1.78} $\pm$ \textbf{1.02} \\ (0.83 $\pm$ 0.07 \%) \end{tabular} &
\begin{tabular}[c]{@{}c@{}} 2.36 $\pm$ 1.37 \\ (0.85 $\pm$ 0.09 \%) \end{tabular} &
\begin{tabular}[c]{@{}c@{}} \textbf{1.68} $\pm$ \textbf{0.82} \\ (0.83 $\pm$ 0.05 \%) \end{tabular} &
\begin{tabular}[c]{@{}c@{}} \textbf{1.23} $\pm$ \textbf{0.76} \\ (0.83 $\pm$ 0.05 \%) \end{tabular} &
\begin{tabular}[c]{@{}c@{}} \textbf{1.59} $\pm$ \textbf{0.78} \\ (0.82 $\pm$ 0.05 \%) \end{tabular} &
\begin{tabular}[c]{@{}c@{}} \textbf{1.13} $\pm$ \textbf{0.67} \\ (0.82 $\pm$ 0.05 \%) \end{tabular} &
280,069 \\
\bottomrule
\end{tabular}
}
\end{table}

\subsection{Effect of contraction parameter $\gamma$}\label{appendix:Ablation-3}
The contraction parameter $\gamma$, as a hyperparameter, changes the fixed point (because it changes the linear map) and controls the rate of convergent speed as well. We investigate the effect of $\gamma$ to the predictive performance of CGS. We test three $\gamma={0.3, 0.5, 0.7}$ and the experimental results are given in Table \ref{table:gamma-ablations}. The CGS model with smaller $\gamma$ tends to have higher prediction errors compared to the larger $\gamma$. However, using larger $\gamma$ enlargers the number of iterative steps. In this regard, we use $\gamma=0.5$ to balance the computational speed and the predictive performance of CGS.

\begin{table}[t]
\caption{\textbf{$\gamma$ ablations.} We report the average MAPE and policy prediction accuracies (in $\%$) of different $n_s$ and $n_a$ combinations with 500 repeats per combination. All metrics are measured per graph. $\pm$ shows the standard deviation of the metrics. All metrics are measured per graph.}
\label{table:gamma-ablations}
\centering
\resizebox{\columnwidth}{!}{
\begin{tabular}{c|cc|cc|cc|cc|c}
\toprule
$n_s$ & \multicolumn{2}{c}{20} & \multicolumn{2}{c}{50} & \multicolumn{2}{c}{75} & \multicolumn{2}{c}{100} & \#. params \\
\midrule
$n_a$ & 5 & 10 & 10 & 15 & 10 & 15 & 10 & 15 & \\
\midrule
{\fontfamily{lmtt}\selectfont $\gamma=0.3$} & 
\begin{tabular}[c]{@{}c@{}} 5.93 $\pm$ 3.54 \\ (0.83 $\pm$ 0.09 \%) \end{tabular} &
\begin{tabular}[c]{@{}c@{}} 2.50 $\pm$ 1.41 \\ (0.83 $\pm$ 0.09 \%) \end{tabular} &
\begin{tabular}[c]{@{}c@{}} 2.31 $\pm$ 1.17 \\ (0.82 $\pm$ 0.06 \%) \end{tabular} &
\begin{tabular}[c]{@{}c@{}} 1.63 $\pm$ 1.01 \\ (0.82 $\pm$ 0.09 \%) \end{tabular} &
\begin{tabular}[c]{@{}c@{}} 2.43 $\pm$ 1.11 \\ (0.81 $\pm$ 0.06 \%) \end{tabular} &
\begin{tabular}[c]{@{}c@{}} 1.44 $\pm$ 0.66 \\ (0.80 $\pm$ 0.05 \%) \end{tabular} &
\begin{tabular}[c]{@{}c@{}} 2.45 $\pm$ 1.03 \\ (0.82 $\pm$ 0.05 \%) \end{tabular} &
\begin{tabular}[c]{@{}c@{}} 1.52 $\pm$ 0.64 \\ (0.80 $\pm$ 0.05 \%) \end{tabular} &
280,069 \\
{\fontfamily{lmtt}\selectfont $\gamma=0.5$} & 
\begin{tabular}[c]{@{}c@{}} 4.55 $\pm$ 2.60 \\ (0.85 $\pm$ 0.09 \%) \end{tabular} &
\begin{tabular}[c]{@{}c@{}} 1.83 $\pm$ 1.18 \\ (0.86 $\pm$ 0.09 \%) \end{tabular} &
\begin{tabular}[c]{@{}c@{}} 1.78 $\pm$ 1.02 \\ (0.83 $\pm$ 0.07 \%) \end{tabular} &
\begin{tabular}[c]{@{}c@{}} 2.36 $\pm$ 1.37 \\ (0.85 $\pm$ 0.09 \%) \end{tabular} &
\begin{tabular}[c]{@{}c@{}} 1.68 $\pm$ 0.82 \\ (0.83 $\pm$ 0.05 \%) \end{tabular} &
\begin{tabular}[c]{@{}c@{}} 1.23 $\pm$ 0.76 \\ (0.83 $\pm$ 0.05 \%) \end{tabular} &
\begin{tabular}[c]{@{}c@{}} 1.59 $\pm$ 0.78 \\ (0.82 $\pm$ 0.05 \%) \end{tabular} &
\begin{tabular}[c]{@{}c@{}} 1.13 $\pm$ 0.67 \\ (0.82 $\pm$ 0.05 \%) \end{tabular} &
280,069 \\
{\fontfamily{lmtt}\selectfont $\gamma=0.7$} & 
\begin{tabular}[c]{@{}c@{}} 3.68 $\pm$ 2.60 \\ (0.88 $\pm$ 0.08 \%) \end{tabular} &
\begin{tabular}[c]{@{}c@{}} 1.68 $\pm$ 1.08 \\ (0.86 $\pm$ 0.09 \%) \end{tabular} &
\begin{tabular}[c]{@{}c@{}} 1.60 $\pm$ 0.89 \\ (0.85 $\pm$ 0.06 \%) \end{tabular} &
\begin{tabular}[c]{@{}c@{}} 1.49 $\pm$ 0.97 \\ (0.84 $\pm$ 0.08 \%) \end{tabular} &
\begin{tabular}[c]{@{}c@{}} 1.48 $\pm$ 0.75 \\ (0.84 $\pm$ 0.05 \%) \end{tabular} &
\begin{tabular}[c]{@{}c@{}} 1.17 $\pm$ 0.54 \\ (0.83 $\pm$ 0.05 \%) \end{tabular} &
\begin{tabular}[c]{@{}c@{}} 1.46 $\pm$ 0.66 \\ (0.84 $\pm$ 0.04 \%) \end{tabular} &
\begin{tabular}[c]{@{}c@{}} 1.19 $\pm$ 0.49 \\ (0.82 $\pm$ 0.05 \%) \end{tabular} &
280,069 \\
\bottomrule
\end{tabular}
}
\end{table}

\subsection{Comparisons to the non-linear iterative maps}\label{appendix:Ablation-4} 
A natural question to the linear iterative map of CGS is "can we achieve a performance gain if we employ non-linear contracting iterative maps?" To answer the question, we provide the extended experiment results.

The analysis of the existence and uniqueness of fixed points still holds when CGS employs component-wise non-expansive (CONE) activation (e.g., ReLU, LeakyReLU, Tanh, Swish, Mish) to the outputs of $\mathcal{T}_\theta(\cdot)$ given as follows:
\begin{align}
\label{eqn:appendix-contracting-nonlinear-map}
\mathcal{T}_\theta(\mH^{[n]};\gG) = \phi(\gamma \mA_\theta(\gG) \mH^{[n]} + \mB_\theta(\gG))
\end{align}
where $\phi(\cdot)$ is a CONE activation. The gradient of loss w.r.t $\mA_\theta(\gG)$ and $\mB_\theta(\gG)$ can be computed similarly to the non-linear activation cases as described in Appendix \ref{appendix:GSS-pseudocode}. 

\begin{table}[t]
\caption{\textbf{Graph Value Iteration results over the non-linear and linear transition maps.} We report the average MAPE and policy prediction accuracies (in $\%$) of different $n_s$ and $n_a$ combinations. All metrics are measured per graph. $\pm$ shows the standard deviation of the metrics.}
\label{table:appendix-GVI-table-2}
\centering
\resizebox{\columnwidth}{!}{
\begin{tabular}{c|cc|cc|cc|cc|c}
\toprule
$n_s$ & \multicolumn{2}{c}{20} & \multicolumn{2}{c}{50} & \multicolumn{2}{c}{75} & \multicolumn{2}{c}{100} & \#. params \\
\midrule
$n_a$ & 5 & 10 & 10 & 15 & 10 & 15 & 10 & 15 & \\
\midrule
{\fontfamily{lmtt}\selectfont nl-CGS(8)} & 
\begin{tabular}[c]{@{}c@{}} 4.73 $\pm$ 2.67 \\ (0.80 $\pm$ 0.10 \%) \end{tabular} &
\begin{tabular}[c]{@{}c@{}} 3.47 $\pm$ 1.96 \\ (0.84 $\pm$ 0.10 \%) \end{tabular} &
\begin{tabular}[c]{@{}c@{}} 3.57 $\pm$ 1.87 \\ (0.81 $\pm$ 0.07 \%) \end{tabular} &
\begin{tabular}[c]{@{}c@{}} 4.92 $\pm$ 1.64 \\ (0.83 $\pm$ 0.09 \%) \end{tabular} &
\begin{tabular}[c]{@{}c@{}} 3.38 $\pm$ 1.54 \\ (0.81 $\pm$ 0.06 \%) \end{tabular} &
\begin{tabular}[c]{@{}c@{}} 4.58 $\pm$ 1.12 \\ (0.82 $\pm$ 0.05 \%) \end{tabular} &
\begin{tabular}[c]{@{}c@{}} 3.21 $\pm$ 1.38 \\ (0.81 $\pm$ 0.05 \%) \end{tabular} &
\begin{tabular}[c]{@{}c@{}} 4.61 $\pm$ 1.04 \\ (0.82 $\pm$ 0.05 \%) \end{tabular} &
254,869 \\
{\fontfamily{lmtt}\selectfont nl-CGS(16)} & 
\begin{tabular}[c]{@{}c@{}} 4.56 $\pm$ 2.97 \\ (0.83 $\pm$ 0.09 \%) \end{tabular} &
\begin{tabular}[c]{@{}c@{}} 2.03 $\pm$ 1.25 \\ (0.85 $\pm$ 0.09 \%) \end{tabular} &
\begin{tabular}[c]{@{}c@{}} 1.93 $\pm$ 0.97 \\ (0.82 $\pm$ 0.06 \%) \end{tabular} &
\begin{tabular}[c]{@{}c@{}} 1.40 $\pm$ 0.89 \\ (0.85 $\pm$ 0.09 \%) \end{tabular} &
\begin{tabular}[c]{@{}c@{}} 1.72 $\pm$ 0.78 \\ (0.82 $\pm$ 0.05 \%) \end{tabular} &
\begin{tabular}[c]{@{}c@{}} 1.17 $\pm$ 0.52 \\ (0.81 $\pm$ 0.05 \%) \end{tabular} &
\begin{tabular}[c]{@{}c@{}} 1.67 $\pm$ 0.63 \\ (0.81 $\pm$ 0.05 \%) \end{tabular} &
\begin{tabular}[c]{@{}c@{}} 1.19 $\pm$ 0.48 \\ (0.80 $\pm$ 0.05 \%) \end{tabular} &
258,469 \\
{\fontfamily{lmtt}\selectfont nl-CGS(32)} & 
\begin{tabular}[c]{@{}c@{}} 4.55 $\pm$ 2.72 \\ (0.85 $\pm$ 0.10 \%) \end{tabular} &
\begin{tabular}[c]{@{}c@{}} 3.75 $\pm$ 1.82 \\ (0.83 $\pm$ 0.09 \%) \end{tabular} &
\begin{tabular}[c]{@{}c@{}} 3.88 $\pm$ 1.77 \\ (0.79 $\pm$ 0.07 \%) \end{tabular} &
\begin{tabular}[c]{@{}c@{}} 4.58 $\pm$ 1.50 \\ (0.80 $\pm$ 0.11 \%) \end{tabular} &
\begin{tabular}[c]{@{}c@{}} 3.82 $\pm$ 1.44 \\ (0.78 $\pm$ 0.06 \%) \end{tabular} &
\begin{tabular}[c]{@{}c@{}} 4.33 $\pm$ 1.02 \\ (0.76 $\pm$ 0.06 \%) \end{tabular} &
\begin{tabular}[c]{@{}c@{}} 3.76 $\pm$ 1.32 \\ (0.78 $\pm$ 0.05 \%) \end{tabular} &
\begin{tabular}[c]{@{}c@{}} 4.34 $\pm$ 0.97 \\ (0.76 $\pm$ 0.05 \%) \end{tabular} &
265,669 \\
{\fontfamily{lmtt}\selectfont nl-CGS(64)} & 
\begin{tabular}[c]{@{}c@{}} 4.99 $\pm$ 3.20 \\ (0.82 $\pm$ 0.10 \%) \end{tabular} &
\begin{tabular}[c]{@{}c@{}} 2.00 $\pm$ 1.21 \\ (0.84 $\pm$ 0.09 \%) \end{tabular} &
\begin{tabular}[c]{@{}c@{}} 1.93 $\pm$ 0.97 \\ (0.81 $\pm$ 0.06 \%) \end{tabular} &
\begin{tabular}[c]{@{}c@{}} 1.60 $\pm$ 1.05 \\ (0.84 $\pm$ 0.09 \%) \end{tabular} &
\begin{tabular}[c]{@{}c@{}} 1.68 $\pm$ 0.73 \\ (0.81 $\pm$ 0.05 \%) \end{tabular} &
\begin{tabular}[c]{@{}c@{}} 1.18 $\pm$ 0.66 \\ (0.81 $\pm$ 0.05 \%) \end{tabular} &
\begin{tabular}[c]{@{}c@{}} 1.62 $\pm$ 0.63 \\ (0.81 $\pm$ 0.05 \%) \end{tabular} &
\begin{tabular}[c]{@{}c@{}} 1.13 $\pm$ 0.59 \\ (0.81 $\pm$ 0.05 \%) \end{tabular} &
280,069 \\
\midrule
{\fontfamily{lmtt}\selectfont CGS(8)} & 
\begin{tabular}[c]{@{}c@{}} 3.81 $\pm$ 2.76 \\ (0.88 $\pm$ 0.09 \%) \end{tabular} &
\begin{tabular}[c]{@{}c@{}} 2.42 $\pm$ 1.51 \\ (0.88 $\pm$ 0.08 \%) \end{tabular} &
\begin{tabular}[c]{@{}c@{}} 2.23 $\pm$ 1.40 \\ (0.85 $\pm$ 0.06 \%) \end{tabular} &
\begin{tabular}[c]{@{}c@{}} 3.03 $\pm$ 1.39 \\ (0.84 $\pm$ 0.09 \%) \end{tabular} &
\begin{tabular}[c]{@{}c@{}} 2.03 $\pm$ 1.20 \\ (0.84 $\pm$ 0.05 \%) \end{tabular} &
\begin{tabular}[c]{@{}c@{}} 2.00 $\pm$ 0.91 \\ (0.80 $\pm$ 0.05 \%) \end{tabular} &
\begin{tabular}[c]{@{}c@{}} 1.93 $\pm$ 1.09 \\ (0.84 $\pm$ 0.05 \%) \end{tabular} &
\begin{tabular}[c]{@{}c@{}} 1.90 $\pm$ 0.85 \\ (0.80 $\pm$ 0.05 \%) \end{tabular} &
254,869 \\
{\fontfamily{lmtt}\selectfont CGS(16)} & 
\begin{tabular}[c]{@{}c@{}} 4.24 $\pm$ 2.51 \\ (0.84 $\pm$ 0.10 \%) \end{tabular} &
\begin{tabular}[c]{@{}c@{}} 3.16 $\pm$ 1.81 \\ (0.86 $\pm$ 0.09 \%) \end{tabular} &
\begin{tabular}[c]{@{}c@{}} 2.92 $\pm$ 1.67 \\ (0.83 $\pm$ 0.06 \%) \end{tabular} &
\begin{tabular}[c]{@{}c@{}} 4.26 $\pm$ 1.44 \\ (0.83 $\pm$ 0.10 \%) \end{tabular} &
\begin{tabular}[c]{@{}c@{}} 2.70 $\pm$ 1.38 \\ (0.83 $\pm$ 0.05 \%) \end{tabular} &
\begin{tabular}[c]{@{}c@{}} 3.06 $\pm$ 1.01 \\ (0.82 $\pm$ 0.05 \%) \end{tabular} &
\begin{tabular}[c]{@{}c@{}} 2.60 $\pm$ 1.26 \\ (0.83 $\pm$ 0.05 \%) \end{tabular} &
\begin{tabular}[c]{@{}c@{}} 2.95 $\pm$ 0.95 \\ (0.82 $\pm$ 0.05 \%) \end{tabular} &
258,469 \\
{\fontfamily{lmtt}\selectfont CGS(32)} & 
\begin{tabular}[c]{@{}c@{}} 4.34 $\pm$ 2.83 \\ (0.85 $\pm$ 0.09 \%) \end{tabular} &
\begin{tabular}[c]{@{}c@{}} 2.10 $\pm$ 1.26 \\ (0.83 $\pm$ 0.09 \%) \end{tabular} &
\begin{tabular}[c]{@{}c@{}} 1.95 $\pm$ 1.04 \\ (0.81 $\pm$ 0.06 \%) \end{tabular} &
\begin{tabular}[c]{@{}c@{}} 2.15 $\pm$ 1.18 \\ (0.78 $\pm$ 0.11 \%) \end{tabular} &
\begin{tabular}[c]{@{}c@{}} 1.69 $\pm$ 0.80 \\ (0.80 $\pm$ 0.05 \%) \end{tabular} &
\begin{tabular}[c]{@{}c@{}} 1.20 $\pm$ 0.47 \\ (0.76 $\pm$ 0.06 \%) \end{tabular} &
\begin{tabular}[c]{@{}c@{}} 1.61 $\pm$ 0.70 \\ (0.80 $\pm$ 0.05 \%) \end{tabular} &
\begin{tabular}[c]{@{}c@{}} 1.17 $\pm$ 0.40 \\ (0.76 $\pm$ 0.05 \%) \end{tabular} &
265,669 \\
{\fontfamily{lmtt}\selectfont CGS(64)} & 
\begin{tabular}[c]{@{}c@{}} 4.59 $\pm$ 2.82 \\ (0.84 $\pm$ 0.09 \%) \end{tabular} &
\begin{tabular}[c]{@{}c@{}} 1.93 $\pm$ 1.25 \\ (0.85 $\pm$ 0.09 \%) \end{tabular} &
\begin{tabular}[c]{@{}c@{}} 1.85 $\pm$ 0.99 \\ (0.83 $\pm$ 0.06 \%) \end{tabular} &
\begin{tabular}[c]{@{}c@{}} 2.38 $\pm$ 1.38 \\ (0.85 $\pm$ 0.09 \%) \end{tabular} &
\begin{tabular}[c]{@{}c@{}} 1.60 $\pm$ 0.76 \\ (0.83 $\pm$ 0.06 \%) \end{tabular} &
\begin{tabular}[c]{@{}c@{}} 1.17 $\pm$ 0.75 \\ (0.83 $\pm$ 0.05 \%) \end{tabular} &
\begin{tabular}[c]{@{}c@{}} 1.53 $\pm$ 0.64 \\ (0.82 $\pm$ 0.05 \%) \end{tabular} &
\begin{tabular}[c]{@{}c@{}} 1.12 $\pm$ 0.66 \\ (0.82 $\pm$ 0.05 \%) \end{tabular} &
280,069 \\
\bottomrule

\end{tabular}
}
\end{table}

We compare the GVI results of linear CGS to the non-linear CGS utilizing LeakyReLU as $\phi(\cdot)$. Table \ref{table:appendix-GVI-table-2} shows the GVI experiment results. The linear CGS, {\fontfamily{lmtt}\selectfont CGS($m$)}, and non-linear CGS, {\fontfamily{lmtt}\selectfont nl-CGS($m$)}, shows similar predictive performance on GVI experiments in general. From these experiments, we can observe that the non-linear extension of CGS does not give significant performance gain.

Furthermore, the training of non-linear CGSs can be challenging as (1) they exhibit higher variance in loss while training, and (2) the choice of non-linearity can severely change the performance of the entire model. For instance, the rectifying units such as ReLU and LeakyReLU can result in the premature termination of the iterative schemes of CGS when $\mB_\theta(\gG)$ has large negative values.  Bounded activation such as Tanh limits the range of hidden embeddings to a certain range.

\subsection{Sample efficiency of CGS}\label{appendix:Ablation-5}  
We assumed the training graph and corresponding labels are easily sampled while we solving physical diffusion problem and GVI. For some practical cases, this assumptions cannot be made. Hence, we investigate the sample efficiency of CGS and our closest baseline {\fontfamily{lmtt}\selectfont IGNN}\citep{gu2020implicit}.

\begin{figure}
    \centering
    \includegraphics[width=1.0\linewidth]{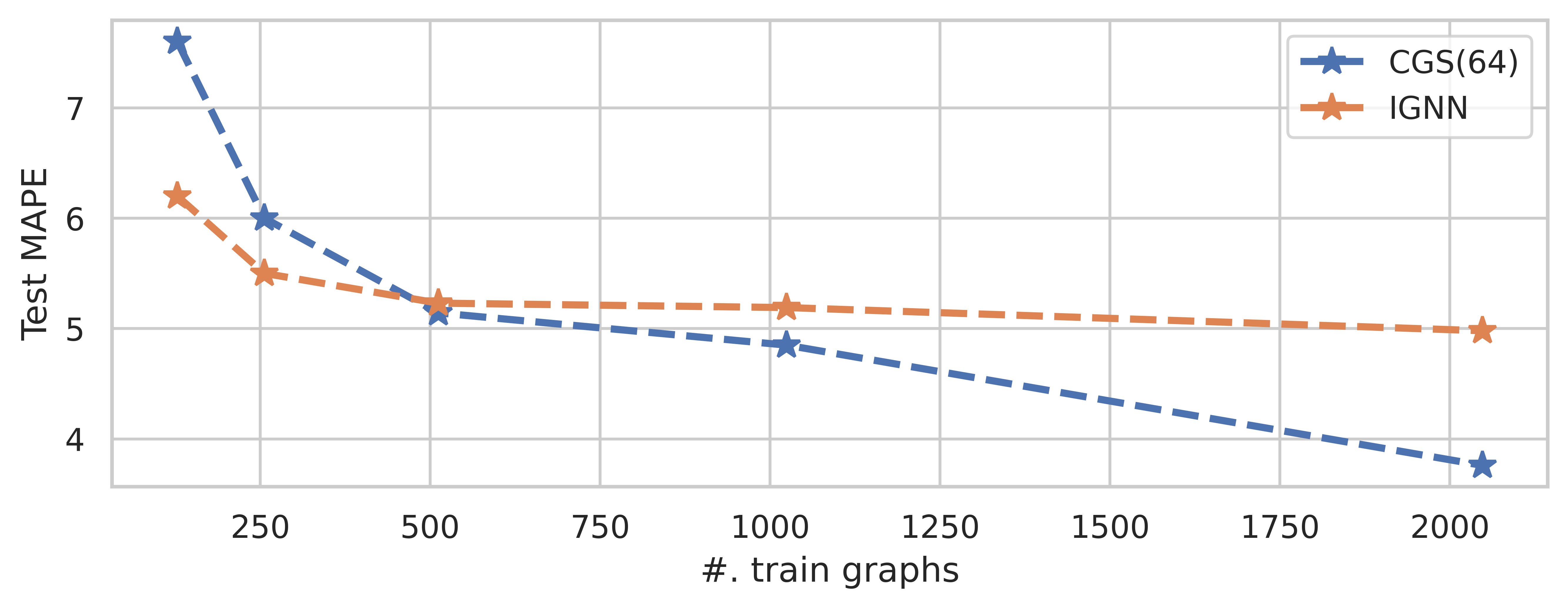}
    \caption{\textbf{Sample efficiency}}
    \label{fig:sample-efficiency}
\end{figure}

We prepare 2048 training and 2048 test graphs and their labels. We then train {\fontfamily{lmtt}\selectfont CGS} and {\fontfamily{lmtt}\selectfont IGNN} by using the first 128, 256, 512, 1024, and 2048 training graphs. As shown in \figref{fig:sample-efficiency}, we can confirm that the trade-off between the sample efficiency and expressivities. {\fontfamily{lmtt}\selectfont IGNN} only utilizes the input-dependent bias terms in the transition maps; thus, such structural assumptions can serve as an effective regularizer when the training samples are limited. However, {\fontfamily{lmtt}\selectfont IGNN} shows more minor improvements along with the number of training samples. On the other hand, {\fontfamily{lmtt}\selectfont CGS} performs worse than {\fontfamily{lmtt}\selectfont IGNN} when the training graphs are limited, but it starts to outperform {\fontfamily{lmtt}\selectfont IGNN} as more training samples are used. Finally, when the number of training samples increases ($\geq$ 512 graphs), {\fontfamily{lmtt}\selectfont CGS} outperforms {\fontfamily{lmtt}\selectfont IGNN} significantly.

\subsection{Runtime comparisons of direct inversion and iterative methods}\label{appendix:Ablation-6}   
In this paragraph, we provide the experimental results that shows the runtime of {\fontfamily{lmtt}\selectfont CGS(2)} models which solve the fixed point equation via direct inversion and iterative methods. For all size of GVI graphs, we test the models 100 times with $n_a=5$. As shown in \figref{fig:runtimes}, for small graphs $n_s \leq 4000$, solving the fixed point equation with the direct inversion is faster than solving it with the iterative scheme. However, the direct inversion scales worse than iterative method.

\begin{figure}
    \centering
    \includegraphics[width=1.0\linewidth]{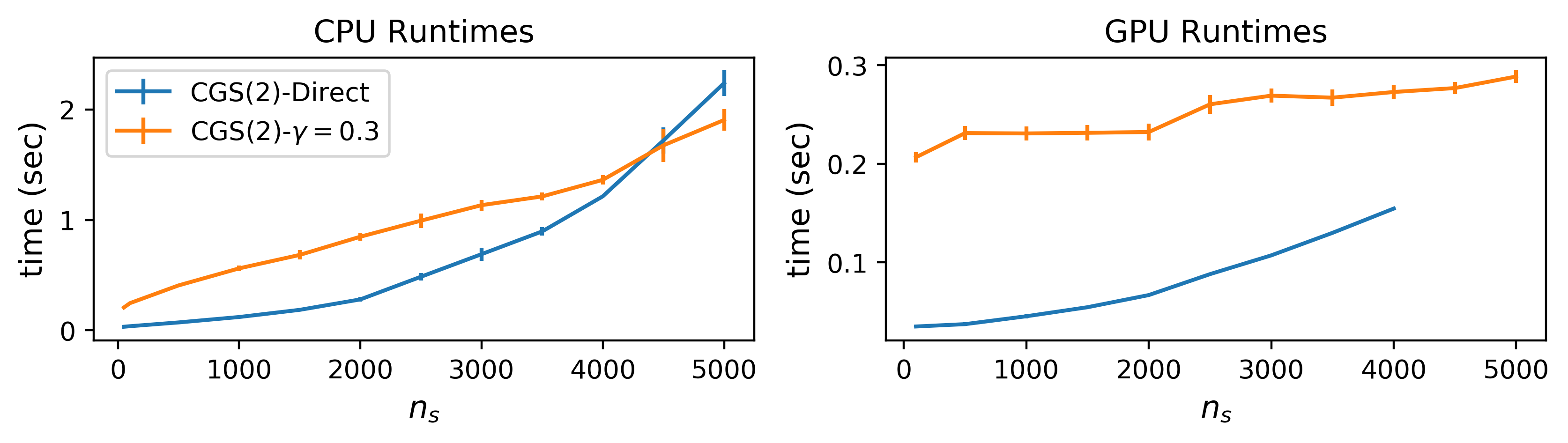}
    \caption{\textbf{Runtimes of CGS models.} For GPU experiments, the memory usage of the direct method with $n_s \geq 4000$ exceeds 24GB VRAM.}
    \label{fig:runtimes}
\end{figure}

\end{document}